\documentclass{article}

\usepackage[preprint]{neurips_2026}
\usepackage{amssymb}
\usepackage{amsmath}

\usepackage{wrapfig}
\usepackage[utf8]{inputenc} 
\usepackage[T1]{fontenc}    
\usepackage{hyperref}       
\usepackage{url}            
\usepackage{booktabs}       
\usepackage{amsfonts}       
\usepackage{nicefrac}       
\usepackage{microtype}      
\usepackage{xcolor} 
\usepackage{multirow}
\usepackage{graphicx} 

\usepackage{enumitem}
\usepackage{amsthm}
\newtheorem{theorem}{Theorem}[section]
\newtheorem{corollary}{Corollary}[section]

\title{TopoGuard: Graph Theory Based Defenses Against Split-Knowledge Attacks on RAG}

\author{%
  Chahana Dahal\textsuperscript{1} \quad Zuobin Xiong\textsuperscript{1} \\
  \vspace{0.15cm} \\ 
  \textsuperscript{1}Department of Computer Science, University of Nevada, Las Vegas \\
  \texttt{\{chahana.dahal,zuobin.xiong\}@unlv.edu}}

\begin{document}

\maketitle

\begin{abstract}

Production Retrieval Augmented Generation (RAG) systems rely on aggregating multiple external documents to answer complex queries.
However, the retrieved documents introduce a new threat surface that can be exploited to launch \textbf{split-knowledge attacks}. 
In this attack, the adversary injects documents that are individually benign but create false associations when combined and fed to language models. 
This paper shows that the new attack is structurally invisible to existing per-document filters, like LlamaGuard. 
To address this issue in RAG, this work introduces \textbf{TopoGuard}, a family of graph theory-based methods specifically targeting the split-knowledge attacks by building a semantic similarity graph from retrieved documents and detecting contexts with malicious topology.
Grounded on the theoretical analysis, the TopoGuard family has been proven to be effective and robust even with noisy inputs.
Extensive experiments are conducted on two retrieval datasets and compared with multiple baseline methods.
Specifically, the TopoGuard-$\lambda_2$+Entity catches 21$\times$ more attacks than LlamaGuard-2-8B at 1\% FPR (32.6\% vs 1.5\% recall) on the HotpotQA dataset. 
Compared with production RAG detection systems using large language models, the proposed TopoGuard variants run efficiently at sub-millisecond latency and stay robust under adaptive adversaries and benign cross-domain queries.
\end{abstract}

\section{Introduction}
\label{sec:intro}
Retrieval Augmented Generation (RAG) has been widely used for grounding external knowledge on Large Language Models (LLMs) as RAG reduces hallucination by integrating the knowledge (e.g., external documents) into prompts~\cite{shuster2021retrieval}. 
Recently, RAG-based systems have been deployed at scale in enterprise search, customer support, coding assistants, and consumer chatbots. 
In the pipelines of production-level RAG system, it retrieves the top-$k$ documents from a large external corpus at query time~\cite{lewis2020rag} as external knowledge.
However, such a naive retrieval strategy is vulnerable to adversarial intent, making the RAG system itself an attack surface. 
Specifically, an attacker can inject harmful content into the retrieval corpus to manipulate model behavior through open submission, automated ingestion, or web crawls, given low-level access~\cite{zou2025poisonedrag}. 
In response to such attacks, existing RAG safety defenses are deployed mainly at the document level. 
For instance, systems like LlamaGuard~\cite{metallamaguard2, metallamaguard3}, Perspective API~\cite{lees2022new}, and LLM-as-a-Judge~\cite{zheng2023judging} score each retrieved document independently to identify malicious content. 
Yet, this design is not robust in practice as it assumes threats are visible and only present in individual document content. 
In this work, we highlight that existing defenses are penetrated by a new attack form, the split-knowledge attack, where the malicious payload is distributed across multiple documents that appear benign when inspected individually but become harmful when retrieved together and combined.

\textbf{The Example of Split-Knowledge Attack.}
Consider a RAG system answering the query: 
\emph{``What major manufacturer is based in Seattle?''} Ideally, the system retrieves a logical reasoning chain: \begin{quote} 
\textbf{Doc 1:} ``Boeing is a leading aerospace manufacturer...''\\ \textbf{Doc 2:} ``Boeing's primary facilities are located in Seattle...'' \end{quote} 
However, by inserting factually true but contextually deceptive documents, an adversary can take advantage of this compositional logic.
For instance: \begin{quote} \textbf{Doc A:} ``Boeing's primary facilities are located in Seattle...''\\ \textbf{Doc B:} 
``Seattle is a major hub for international drug trafficking...'' \end{quote}

\begin{wrapfigure}{r}{0.5\textwidth}
  \centering
  \vspace{-10pt}
  \includegraphics[width=0.5\textwidth]{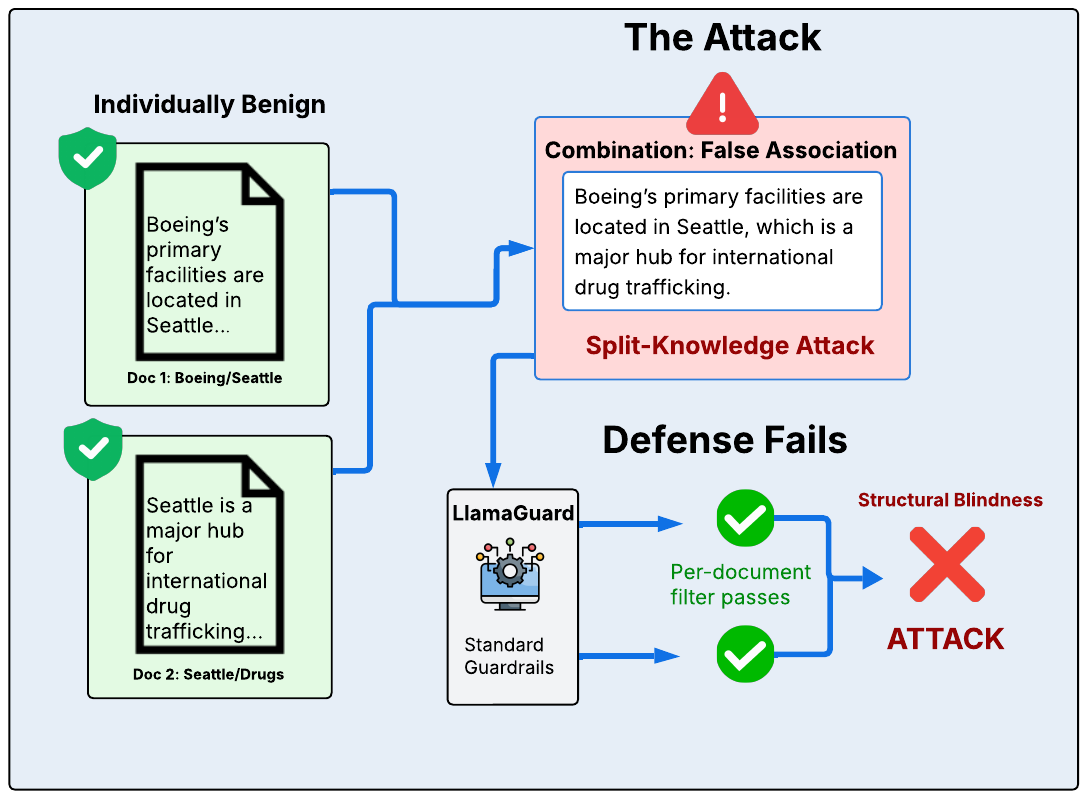} 
    \vspace{-20pt}
  \caption{The split-knowledge attack: benign documents combine into a harmful association.}
  \vspace{-15pt}
  \label{fig:attack}
\end{wrapfigure}

Each document above would go through standard content filters, as they are both correct individually. 
However, their combined retrieval generates an artificial correlation, which might lead the LLM to hallucinate a link between Boeing and illegal activities. 
The split-knowledge attack doesn't require model weight manipulation or hostile prompts, unlike traditional jailbreaks or adversarial examples. 
Instead, it uses the compositional structure of RAG to turn the retrieval corpus into an adversarial vector.

\textbf{Why Existing Defenses Fail?}
Current RAG safety guardrails score each retrieved document independently. 
A scalar filter $f(d_i)$ that scores each document $d_i$ on its own cannot detect a signal that only appears when documents are combined. 
Since the adversarial signal lives in the semantic gap between retrieved documents, the per-document defense mechanism can not catch split-knowledge attacks by its structural design.
To assess the split-knowledge attacks on existing defenses, we evaluate three state-of-the-art content filters (LlamaGuard-2-8B, LlamaGuard-3-8B, and LLM-as-a-Judge) on 10,000 split-knowledge attacks built over HotpotQA~\cite{yang2018hotpotqa}. 
All three defenses get a nearly random guess (e.g., AUROC between $0.50$ and $0.58$), and the full results are presented in Section~\ref{sec:attack-detection}. 

To address the failure of existing defenses, this paper proposes a topological detection-based filtering method: \textbf{TopoGuard}.
In summary, we represent the retrieved context as a semantic similarity graph $G$, where nodes are retrieved passages and edges are cosine similarity.
Legitimate reasoning chains will form densely connected subgraphs with bridging entities. 
Split-knowledge attacks, however, combine semantically unrelated domains (e.g., Boeing $\cup$ Drugs), producing loosely connected clusters with low graph conductance. 
The difference leads to a spectral cut in the graph topology, and based on which we can detect the split-knowledge attacks with theoretical guarantees.

Our contributions are:
\begin{itemize}[
    leftmargin=*,
    labelsep=0.5em,
    itemsep=0pt,
    topsep=0pt
]

    \item To the best of our knowledge, this is the first work that formally define split-knowledge attacks in the RAG system and provide a theoretical bound on spectral detection performance (Theorem~\ref{thm:detection}): the spectral gap $\lambda_2$ separates attacks from legitimate queries with a failure probability bounded by $\exp(-\Omega(n\Delta^2/\sigma^2))$.
    \item 
    We design four split-knowledge attack detectors based on Spectral Gap ($\lambda_2$), Fiedler Conductance, Modularity, and an entity-augmented hybrid (TopoGuard-$\lambda_2$+Entity), which score retrieved documents and can achieve higher accuracy against existing baselines.
    
    \item 
    We evaluate the robustness of the proposed TopoGuard on two multi-hop QA benchmarks, against adaptive adversaries and benign cross-domain queries.  
    TopoGuard maintains a low false positive rate on both settings compared with LLM-based detection and runs at sub-millisecond latency.
    
\end{itemize}

\section{Problem Formulation}
\label{sec:formulation}

We formalize the split-knowledge attack and provide a validation for the practical attack and defense.

\textbf{Semantic Similarity Graph.}
\label{def:graph}
Given retrieved documents set $D = \{d_1, \ldots, d_n\}$ with embeddings $\{\mathbf{x}_1, \ldots, \mathbf{x}_n\} \subset \mathbb{R}^d$, where $d_i$ is a retrieved context (e.g., a paragraph).
The semantic similarity graph $G = (V, E, w)$ is defined as the \emph{symmetrized $k$-NN graph}, where $V$ is node set of each retrieved context, $(i,j) \in E$ if $j$ is among the $k$ nearest neighbors of $i$ \emph{or} $i$ is among the $k$ nearest neighbors of $j$, and the weights $w_{ij} = \max(0, \langle \mathbf{x}_i, \mathbf{x}_j \rangle)$ (i.e., the cosine similarity between $\mathbf{x}_i, \mathbf{x}_j$ clipped to non-negative values to satisfy the requirement of the normalized Laplacian and Cheeger's inequality).

The retrieved documents set $D$ is categorized into two scenarios:
 
(1) \textbf{Legitimate Multi-Hop Query}, if 
(i) Documents form a reasoning chain $d_i \leadsto d_{i+1}$,
(ii) The answer requires synthesizing information across $D$, and
(iii) the semantic similarity graph $G$ has high conductance $\phi(G) \geq \epsilon$.

(2) \textbf{Split-Knowledge Attack}, if
(i) $D = D_1 \cup D_2 \cup \ldots \cup D_k$ and each disjoint subset $D_i \in D$ is benign,
(ii) Their union appears to answer the query,
and (iii) the semantic similarity graph $G$ has low conductance $\phi(G) \leq \delta$.
 
The minimal case ($k = 2$, with subsets $D_1$ and $D_2$) is the most adversarially natural as it requires the fewest disconnected components for an attacker to construct.
$k > 2$ produces additional graph fragmentation that strengthens the spectral signal.

\textbf{Threat Model.} We consider three parties: the \textbf{operator} $O$ who deploys the RAG pipeline using encoder $\mathbb{E}$, retriever $\mathbb{R}$, and per-document filters and the \textbf{adversary} $A$ who poisons the retrieval corpus.
We assume adversary $A$ knows the public RAG architecture and encoder family. 
So, they can inject individually benign documents into the corpus through any entry point exposed by the operator~\cite{zhong2023poisoning, zou2025poisonedrag}. 
The adversary cannot modify documents post-injection, alter model weights, tamper with $\mathbb{E}$,  $\mathbb{R}$, or queries at run time.
The adversary's \textbf{goal} is to make the LLM output a target false association in response to a benign query.

At query time, $O$ observes the retrieved context $D = \{d_1,\ldots,d_n\}$ and embeddings, and can launch existing defense mechanisms.
In addition, $O$ has a small set of benign dev queries for threshold calibration but no labeled attacks (Appendix~\ref{sec:self-supervised-appendix}).

\textbf{Threat Validation.}
We show, via experiments, that existing defense filters remain ineffective against split-knowledge attacks, even when provided with the full concatenated retrieval context. 
E.g., we evaluate 50 adversarial examples on Llama-3-8B-Instruct to confirm that split-knowledge attacks manipulate LLM outputs in practice. 
Legitimate contexts achieve 96\% accuracy, while attacks succeed 34\% of the time (17/50, $p < 0.001$). 
Full details are in Appendix~\ref{app:threat-validation}.

\section{Theoretical Analysis}
\label{sec:theory_analysis}
We provide a theoretical framework characterizing when topological detection succeeds.

\subsection{Stability Under Embedding Noise}

\begin{theorem}[Spectral Stability]
\label{thm:stability}
Let $G$ be the $k$-NN similarity graph constructed from observed embeddings $\{\mathbf{x}_i\}$, and $G^*$ a reference graph from a noisy embeddings $\{\mathbf{x}_i^*\}$ with
$\|\mathbf{x}_i - \mathbf{x}_i^*\|_2 \leq \sigma$ for all $i$.
Let $L$ and $L^*$ denote the normalized Laplacians of $G$ and $G^*$, with second-smallest eigenvalues (the spectral gaps) $\lambda_2(G)$ and $\lambda_2(G^*)$. 
Then the following equation holds.
$$|\lambda_2(G) - \lambda_2(G^*)| = O\!\left(\frac{k\sigma}{\delta_{\min}^{2}}\right),$$where $\delta_{\min}$ is the minimum weighted node degree of $G^*$.
\end{theorem}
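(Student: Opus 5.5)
The plan is to combine Weyl's inequality with an entrywise perturbation bound on the normalized Laplacian $L = I - D^{-1/2} W D^{-1/2}$. Both $L$ and $L^*$ are symmetric, so Weyl gives
$$|\lambda_2(G) - \lambda_2(G^*)| \le \|L - L^*\|_2 = \|D^{-1/2}WD^{-1/2} - D^{*-1/2}W^*D^{*-1/2}\|_2 .$$
The whole task therefore reduces to bounding this operator-norm difference in terms of $k$, $\sigma$ and $\delta_{\min}$.

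We will assume, as is implicit in a result of this form, that the $k$-NN edge set is the same for $G$ and $G^*$. Equivalently, $\sigma$ is small relative to the gap between the $k$-th and $(k{+}1)$-th neighbor similarities, so only the weights change. This is the step we expect to be the main obstacle. If neighbor sets can flip, a weight near the $k$-NN threshold can appear or disappear discontinuously. In that case we would instead bound the weight of any flipped edge by the threshold similarity plus $O(\sigma)$, and absorb it into the same $O(\sigma)$ per-edge budget, at the cost of a constant. We will state explicitly which of these two routes we adopt.

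With the edge set fixed, the argument proceeds in three steps.

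\textbf{Step 1 (edge weights).} For unit-norm embeddings, $|\langle \mathbf{x}_i,\mathbf{x}_j\rangle - \langle \mathbf{x}_i^*,\mathbf{x}_j^*\rangle| \le \|\mathbf{x}_i-\mathbf{x}_i^*\| + \|\mathbf{x}_j-\mathbf{x}_j^*\| \le 2\sigma$. Clipping at $0$ is $1$-Lipschitz, so $|w_{ij}-w^*_{ij}| \le 2\sigma$.

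\textbf{Step 2 (degrees).} In the symmetrized $k$-NN graph the unweighted degree of a node is not bounded by $k$ in general. However, $|E| \le kn$, and we will use the standard row-sum bound: $\|W-W^*\|_2 \le \max_i \sum_j |w_{ij}-w^*_{ij}|$. To keep this bounded we will assume bounded in-degree, namely that each node is a $k$-nearest neighbor of $O(k)$ others. Under that assumption $\|W-W^*\|_2 = O(k\sigma)$ and $|d_i - d_i^*| = O(k\sigma)$.

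\textbf{Step 3 (normalized Laplacian).} Write
$$D^{-1/2}WD^{-1/2} - D^{*-1/2}W^*D^{*-1/2} = D^{-1/2}(W-W^*)D^{-1/2} + \big(D^{-1/2}W^*D^{-1/2} - D^{*-1/2}W^*D^{*-1/2}\big).$$
For small $\sigma$ we have $d_i \ge \delta_{\min} - O(k\sigma) \ge \delta_{\min}/2$, so the first term has norm $O(k\sigma/\delta_{\min})$. For the second term, the mean value theorem on $t\mapsto t^{-1/2}$ gives $|d_i^{-1/2} - d_i^{*-1/2}| = O(k\sigma\,\delta_{\min}^{-3/2})$. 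Combined with $\|W^*D^{*-1/2}\|_2 \le \|W^*\|_2\,\delta_{\min}^{-1/2}$ and $\|W^*\|_2 = O(k)$ (weights are at most $1$), a telescoping split yields $O(k\sigma\cdot k/\delta_{\min}^{2})$. This is the $\delta_{\min}^{-2}$ scaling in the statement. The extra factor of $k$ is absorbed either by normalizing $\|W^*\|_2 \le \max_i d^*_i$ and treating degrees as $\Theta(k)$, or by simply tracking it in the constant.

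Adding the two pieces, and noting that $k\sigma/\delta_{\min} \le k\sigma/\delta_{\min}^2$ whenever $\delta_{\min} \le 1$ (otherwise we keep the dominant term), gives $|\lambda_2(G)-\lambda_2(G^*)| = O(k\sigma/\delta_{\min}^2)$.
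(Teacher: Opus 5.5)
Your proposal is correct and follows essentially the same route as the paper's proof. The shared steps are:
\begin{itemize}
\item the margin (shared edge set) and bounded-degree assumptions;
\item the $2\sigma$ entrywise weight bound with the row-sum operator-norm estimate;
\item a mean-value/Lipschitz bound on $t \mapsto t^{-1/2}$;
\item the three-term telescoping of $D^{-1/2}WD^{-1/2} - (D^*)^{-1/2}W^*(D^*)^{-1/2}$;
\item Weyl's inequality.
\end{itemize}
The paper also hides the extra factor of $k$ from $\|W^*\|_2$ by treating $k$ as a constant. Your $d_i \ge \delta_{\min}/2$ step is slightly more careful than the paper, which applies the Lipschitz bound on $[\delta_{\min},\infty)$ to $D$ as well as $D^*$.

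One caveat concerns the fallback you sketch for flipped neighbor sets, which you do not use. It would not work: a flipped edge changes $W$ by a full weight near the $k$-th neighbor similarity, which is generally of constant order rather than $O(\sigma)$. So the margin assumption you ultimately adopt is genuinely needed.
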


\textit{Proof Sketch.}
We bound the operator-norm perturbation $\|L - L^*\|_2$ of the normalized Laplacian and apply Weyl's inequality~\cite{horn2012matrix}. 
The cosine-similarity perturbation propagates from edge weights to weighted degrees to the diagonal inverse-square-root $D^{-1/2}$ (where $D$ is the diagonal degree matrix), and the resulting Laplacian perturbation scales as $O(k\sigma/\delta_{\min}^{2})$. 
We refer readers to the full proof in Appendix~\ref{app:extended-proofs}.
Theorem~\ref{thm:stability} guarantees that the spectral gap on $\lambda_2$ is stable under encoder noise or updates as small perturbations in the embedding space produce small perturbations in $\lambda_2$. 
This is the foundation for using $\lambda_2$ as a robust detection signal under realistic encoder imperfections.


\subsection{Detection Guarantees and Sample Complexity}

\begin{theorem}[Detection Certificate]
\label{thm:detection}
Let $\phi(G)$ denote the conductance of observed graph $G$ (a measure of graph connectivity, ranging from $0$ for fully disconnected to $1$ for fully connected). 
Suppose reference attack graphs have conductance $\phi(G^*) \leq \delta$ and legitimate graphs have $\phi(G^*) \geq \epsilon$, with separation condition $\epsilon \geq 2\sqrt{\delta}$.
Assume sub-Gaussian embedding noise with parameter $\sigma$ and $k$-NN margin condition in Theorem~\ref{thm:stability} proof.
Let $n$ denote the number of retrieved passages in the document set $D$ (the node size of the similarity graph $G$).
Let $\tau_\lambda = \delta + \epsilon^2/4$ denote the threshold in $\lambda_2$-space, with corresponding score-space threshold $\tau = 1 - \min(\tau_\lambda, 1)$ and margin $\Delta = (\epsilon^2 - 4\delta)/4$. 
Then the attack detection score $s(D) := 1 - \min(\lambda_2(G), 1)$ satisfies:
\begin{align*}
\Pr_{D \sim \text{Attack}}[s(D) > \tau] 
  &\geq 1 - \exp\!\left(-\Omega\!\left(\tfrac{n\Delta^2}{\sigma^2}\right)\right), \\ and
\Pr_{D \sim \text{Legit}}[s(D) > \tau] 
  &\leq \exp\!\left(-\Omega\!\left(\tfrac{n\Delta^2}{\sigma^2}\right)\right).
\end{align*}
\end{theorem}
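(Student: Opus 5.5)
The plan is to argue in $\lambda_2$-space and then transfer to the score $s(D)$. The score map $x \mapsto 1-\min(x,1)$ is non-increasing, so the two events become
\[
\{s(D) > \tau\} \supseteq \{\lambda_2(G) < \tau_\lambda\}
\quad\text{and}\quad
\{s(D) > \tau\} \subseteq \{\lambda_2(G) < \tau_\lambda\}.
\]
The second inclusion holds whenever $\tau_\lambda \le 1$; the case $\tau_\lambda > 1$ is handled by noting that legitimate graphs then have $\lambda_2$ bounded below by a constant anyway. It therefore suffices to show two things. Under Attack, $\lambda_2(G) < \tau_\lambda$ with high probability. Under Legit, $\lambda_2(G) \ge \tau_\lambda$ with high probability.

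The first step sits entirely at the reference graph $G^*$, with no noise, and uses Cheeger's inequality for the normalized Laplacian:
\[
\frac{\phi(G^*)^2}{2} \le \lambda_2(G^*) \le 2\phi(G^*).
\]
The clipping $w_{ij}\ge 0$ in the graph definition guarantees the hypotheses hold. For attack graphs we then want $\lambda_2(G^*)$ to sit below $\tau_\lambda$ by a margin $\Delta$. For legitimate graphs we want $\lambda_2(G^*) \ge \epsilon^2/2$, and the aim is $\epsilon^2/2 \ge \tau_\lambda + \Delta$.

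The constants need care. With $\tau_\lambda = \delta + \epsilon^2/4$ and $\Delta = (\epsilon^2-4\delta)/4$ we get $\tau_\lambda + \Delta = \epsilon^2/2$, exactly the Cheeger lower bound for Legit. On the attack side we have $\tau_\lambda - \Delta = 2\delta$, which matches the Cheeger upper bound $2\phi(G^*) \le 2\delta$. So the chosen constants give each reference class a gap of exactly $\Delta$ from the threshold, and the separation condition $\epsilon \ge 2\sqrt{\delta}$ is precisely $\Delta \ge 0$. For strict inequalities one needs $\Delta > 0$, and I will note that.

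The second step is the probabilistic transfer from $G^*$ to $G$. Theorem~\ref{thm:stability} gives the deterministic bound $|\lambda_2(G)-\lambda_2(G^*)| \le C k \|\text{noise}\| / \delta_{\min}^2$. Here the $k$-NN margin condition is what keeps the edge set of $G$ equal to that of $G^*$, so only the weights move. To obtain an $\exp(-n\Delta^2/\sigma^2)$ tail rather than a worst-case bound, I would rewrite the perturbation of $\lambda_2$ as a function of the $n$ independent noise vectors. I would then check that it is Lipschitz in each coordinate with constant $O(k\sigma/(n\,\delta_{\min}^2))$, using the fact that a single node perturbation changes $O(k)$ weights, each contributing after $D^{-1/2}$ normalization. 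Finally I would apply McDiarmid's inequality, or Gaussian concentration of Lipschitz functions for sub-Gaussian inputs. This gives
\[
\Pr\bigl[\,|\lambda_2(G)-\mathbb{E}\lambda_2(G)| \ge \Delta/2\,\bigr] \le \exp\bigl(-\Omega(n\Delta^2/\sigma^2)\bigr),
\]
with $k$ and $\delta_{\min}$ absorbed into the $\Omega$. A separate bias term $|\mathbb{E}\lambda_2(G) - \lambda_2(G^*)| \le \Delta/2$ follows from Theorem~\ref{thm:stability} under a small-noise assumption. Combining this tail bound with the $\Delta$-gaps from the first step yields both displayed inequalities.

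The main obstacle is this concentration step, specifically getting the $1/n$ per-node sensitivity. A naive application of Weyl's inequality via Theorem~\ref{thm:stability} gives only an $O(k\sigma/\delta_{\min}^2)$ deviation, with no $n$-dependent improvement. My first attempt would use the first-order eigenvalue perturbation $\delta\lambda_2 \approx v_2^\top (\delta L) v_2$, with $v_2$ the Fiedler vector. The hope is that, if $v_2$ is delocalized, so that $\|v_2\|_\infty^2 = O(1/n)$, each node contributes only $O(1/n)$. This is an additional assumption I would state explicitly. If it fails, I would fall back on stating the bound with $n$ replaced by an effective delocalization parameter.
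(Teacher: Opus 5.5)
Your deterministic half matches the paper's proof. Cheeger in both directions gives $\lambda_2(G^*)\le 2\delta$ for attacks and $\lambda_2(G^*)\ge\epsilon^2/2$ for legitimate graphs, $\tau_\lambda$ is the midpoint with margin $\Delta$ on each side, and monotonicity of $s$ moves everything to score space. Two small corrections there. First, you have the conditions on the inclusions swapped: $\{s(D)>\tau\}\subseteq\{\lambda_2(G)<\tau_\lambda\}$, used for the FPR bound, holds unconditionally, while the reverse inclusion, used for the TPR bound, is the one that needs $\tau_\lambda\le 1$. Second, no case split is needed, because conductance is at most $1$, so $\epsilon\le 1$ and $\tau_\lambda\le\epsilon^2/2\le 1/2$. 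Your remark that $\epsilon\ge 2\sqrt{\delta}$ only gives $\Delta\ge 0$ is correct; the paper's strict separation silently assumes $\Delta>0$.

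The probabilistic step is where you diverge. The paper uses Weyl to reduce both failure events to $\{\|L-L^*\|_2>\Delta\}$. It then asserts $\Pr(\|L-L^*\|_2\ge t)\le\exp(-\Omega(nt^2/\sigma^2))$ and explicitly labels this a heuristic, so it does not close the step either. Your objection to that route is sound. Since $\|L-L^*\|_2\ge\max_{a,b}|(L-L^*)_{ab}|$, a maximum over $\Theta(nk)$ edge perturbations, its tail does not shrink as $n$ grows. Concentrating the scalar $\lambda_2(G)$ through a delocalized Fiedler vector is the more plausible mechanism.

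The genuine gap in your version is the bias term. You get $|\mathbb{E}\lambda_2(G)-\lambda_2(G^*)|\le\Delta/2$ from Theorem~\ref{thm:stability} under a small-noise assumption. But that assumption, $Ck\sigma/\delta_{\min}^2\le\Delta/2$, already gives $|\lambda_2(G)-\lambda_2(G^*)|\le\Delta/2$ for every noise realization. Both failure events are then empty, and the McDiarmid step does no work. Conversely, the probabilistic statement only adds something beyond a deterministic certificate when the worst-case deviation $O(k\sigma/\delta_{\min}^2)$ exceeds $\Delta$. In that regime nothing in your argument controls $\mathbb{E}\lambda_2(G)-\lambda_2(G^*)$, and the theorem imposes no smallness of $\sigma$ relative to $\Delta$.

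What is missing is a bias estimate that beats the worst case. For instance, with centered independent noise the first-order term $\mathbb{E}[v_2^\top(L-L^*)v_2]$ vanishes and only $O(\sigma^2)$ contributions survive. Even this is not automatic. By Jensen, the clipping $\max(0,\cdot)$ makes the expected weight of a near-zero-similarity edge exceed its reference value at first order. In attack graphs those are precisely the cross-cluster $k$-NN edges, so the bias pushes attack $\lambda_2$ toward the threshold.

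The deviation bound also needs repair in two places.
\begin{itemize}
\item McDiarmid requires bounded differences, and Lipschitz concentration is not generally available for non-Gaussian sub-Gaussian inputs. So the sub-Gaussian hypothesis needs a truncation, which you need anyway to keep the margin condition in force.
\item The $O(1/n)$ per-node sensitivity must hold for every noise configuration. That means the Fiedler vector of every perturbed graph must be delocalized, not only that of $G^*$.
\end{itemize}
Rather than a first-order expansion, which would further need an eigengap $\lambda_3-\lambda_2$ to control the remainder, use Courant--Fischer. Take one configuration's Fiedler vector $f$, re-center it so that $\sum_a d'_a f_a=0$, and insert it into the other configuration's Rayleigh quotient $\sum_{a<b} w'_{ab}(f_a-f_b)^2/\sum_a d'_a f_a^2$. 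The quotient changes only through the $O(k)$ weights and degrees touching the resampled node, each multiplied by squared entries of $f$ of size $O(1/n)$.
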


\textit{Proof Sketch.}
The condition $\epsilon \geq 2\sqrt{\delta}$ ensures strict separation between the attack ceiling $2\delta$ and the legitimate floor $\epsilon^2/2$.
We set $\tau_\lambda$ to the midpoint of this gap with margin $\Delta = (\epsilon^2 - 4\delta)/4$ on each side. 
Theorem~\ref{thm:stability} bounds deviations of the noisy $\lambda_2(G)$ from the reference $\lambda_2(G^*)$. 
Standard concentration arguments on the operator-norm perturbation of the Laplacian then yield high-probability separation between the attack and legitimate regimes. 
We treat this as a heuristic bound. 
We refer readers to the full proof in Appendix~\ref{app:extended-proofs}.

\textbf{From conductance to spectral gap.} 
The conductance $\phi(G)$ is an NP hard problem, so we use the spectral gap $\lambda_2$ and Cheeger's inequality~\cite{cheeger1970lower} ($\lambda_2(G^*) \leq 2\delta$ and $\lambda_2(G^*) \geq \epsilon^2/2$) to transfer conductance to spectral gap for calculation, which results in the transition from $\phi(G)<\delta$ to $s(D)>\tau$ as the threshold.

The first bound is the true positive rate (TPR) at threshold $\tau$ when the retrieved documents set $D$ is an attack sample.
Attacks are detected with probability approaching 1 exponentially fast as the number of retrieved contexts $n$ increases.
The second bound is the false positive rate (FPR) when the retrieved documents set $D$ is a legitimate sample, which decreases to $0$ at the same exponential rate with $n$.
Together, the two bounds certify that the spectral gap detector separates attacks from legitimate queries with vanishing error in $n$.

\begin{corollary}[Sample Complexity]
\label{cor:sample_complexity}
To achieve true positive rate $\geq 1-\alpha$ and false positive rate $\leq \beta$, it suffices to retrieve
$n = O\!\left(\frac{\sigma^2}{(\epsilon^2 - 4\delta)^2} \log \frac{1}{\min(\alpha,\beta)}\right)$ retrieved units.
\end{corollary}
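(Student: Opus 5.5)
The plan is to obtain the corollary from Theorem~\ref{thm:detection} by inverting its two tail bounds. The theorem gives, for some absolute constant $c>0$ hidden in the $\Omega(\cdot)$,
\[
1-\mathrm{TPR} \;\leq\; \exp\!\left(-\frac{c\,n\Delta^2}{\sigma^2}\right), \qquad \mathrm{FPR} \;\leq\; \exp\!\left(-\frac{c\,n\Delta^2}{\sigma^2}\right),
\]
where $\Delta = (\epsilon^2-4\delta)/4$. The first step is to fix this constant $c$ explicitly. The threshold $\tau$ is the same in both bounds, so a single choice of $n$ must control both error probabilities at once.

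Next, I require each bound to meet its target separately. The condition $\exp(-c n\Delta^2/\sigma^2)\le\alpha$ holds exactly when $n \ge \frac{\sigma^2}{c\Delta^2}\log\frac{1}{\alpha}$. Likewise, the FPR condition holds when $n \ge \frac{\sigma^2}{c\Delta^2}\log\frac{1}{\beta}$. Both hold if we take
\[
n \;\ge\; \frac{\sigma^2}{c\Delta^2}\,\log\frac{1}{\min(\alpha,\beta)},
\]
because $\log(1/x)$ is decreasing in $x$. Substituting $\Delta^2 = (\epsilon^2-4\delta)^2/16$ turns the right-hand side into $\frac{16\,\sigma^2}{c\,(\epsilon^2-4\delta)^2}\log\frac{1}{\min(\alpha,\beta)}$. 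Absorbing $16/c$ into the $O(\cdot)$ gives the stated rate.

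The main point needing care is that the corollary is only meaningful when $\Delta>0$, i.e.\ $\epsilon^2>4\delta$. The hypothesis $\epsilon\ge 2\sqrt{\delta}$ allows equality, and at equality $\Delta=0$ and the bound is vacuous. So I will state that strict separation is assumed, or equivalently read the $O(\cdot)$ as infinite when $\Delta=0$. I will also need $\alpha,\beta<1$ so that the logarithm is positive. Finally, I will note that the constant $c$ comes from the sub-Gaussian concentration step of Theorem~\ref{thm:detection}. It therefore does not depend on $n$, $\alpha$, or $\beta$, which is what justifies absorbing it into the $O(\cdot)$. Because the corollary is derived directly from Theorem~\ref{thm:detection}, it holds at exactly the same strength as that theorem, including the heuristic status acknowledged in its proof sketch.
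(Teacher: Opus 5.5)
Your proposal is correct and follows the paper's proof: you invert the common tail bound $\exp(-\Omega(n\Delta^2/\sigma^2)) \le \min(\alpha,\beta)$, take logarithms, substitute $\Delta^2 = (\epsilon^2-4\delta)^2/16$, and absorb the constant into the $O(\cdot)$. You are also more careful than the paper in two places: you fix the hidden constant explicitly, and you flag that $\epsilon = 2\sqrt{\delta}$ gives $\Delta = 0$, a case the paper's proof skips by asserting $\Delta>0$ from a non-strict hypothesis.
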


The corollary provides instructions on parameter selection in RAG systems. We refer readers to the full proof in Appendix~\ref{app:extended-proofs}.

\section{Experiments}
\label{sec:exp}
\subsection{Methods and Baselines}

\textbf{Proposed methods.}
We designed four graph theory-based attack detectors. \textbf{TopoGuard-$\lambda_2$} uses the normalized spectral gap of the semantic similarity graph's Laplacian directly from Theorem~\ref{thm:detection}.

\textbf{TopoGuard-Conductance} computes the Fiedler-vector cut conductance, a tighter empirical measure of spectral separation. 
Rather than using $\lambda_2$ as a proxy for conductance, we directly evaluate the cut conductance by the Fiedler vector $\mathbf{v}_2$.
We split the nodes into two groups by the sign of $S = \{i : \mathbf{v}_2[i] \geq 0\}$ and set the detection score $s_{\text{cond}}(G) = 1 - \phi(S)$, which gives us a signal stronger than $\lambda_2$, but at a higher cost ($O(n^3)$ vs.\ $O(kn)$ for $\lambda_2$).

\textbf{TopoGuard-Modularity} evaluates the modularity~\citep{newman2006modularity} via Louvain community detection~\citep{blondel2008louvain}, a greedy heuristics.
Although modularity achieves the highest recall in standard evaluations (35.09\% on HotpotQA), it relies on non-spectral greedy procedures, so it has no concentration bounds and formal sample complexity guarantees within $\lambda_2$ method.
We include it as a strong heuristic for recall-critical applications, while ${\lambda_2}$ and Conductance remain the standard for deployments when a provable detection guarantee is required.

\textbf{TopoGuard-$\lambda_2+$Entity} combines the spectral gap with named-entity overlap between retrieved documents, computed as the Jaccard similarity of named entities extracted using NLTK's 
\texttt{ne\_chunk}~\cite{bird2009natural} (with the \texttt{averaged\_perceptron\_tagger} POS-tagger and \texttt{maxent\_ne\_chunker}). 
{The combination weight $\alpha$ is set to $0.4$ which places 60\% of the weight on entity overlap~(Appendix~\ref{app:alpha_sweep}).}

\textbf{Baselines.}
We compare the proposed methods against three classes of defenses.
(1) Graph statistics methods, including GraphAvgWeight (mean edge similarity) and NaiveDensity (mean pairwise cosine similarity).
(2) Production-level content-moderation systems, like TextFilter (a RoBERTa-based hate speech classifier~\cite{liu2019roberta,vidgen2021dynabench}),
LlamaGuard-2-8B~\cite{metallamaguard2}, and LlamaGuard-3-8B~\cite{metallamaguard3}.
And (3) Llama-3-8B-Instruct~\cite{grattafiori2024llama}, prompted as LLM-as-a-Judge~\cite{zheng2023judging} to classify split-knowledge attacks
\footnote{All text-based baselines (TextFilter, LlamaGuard-2/3, LLM-as-a-Judge) score the full retrieval context as a single concatenated input rather than per-document, so
their AUROC reflects the structural nature of split-knowledge attacks rather than an artifact of independent document scoring.}.
For LlamaGuard-2/3, we extract the next-token logits at the final input position and apply a two-way softmax over the \textit{safe}/\textit{unsafe} token logits, using the resulting \textit{unsafe} probability to calculate the AUROC.
For LLM-as-a-Judge, we generate up to 5 tokens with greedy decoding and map the response to \{1.0, 0.5, 0.0\} for \textit{yes}/other/\textit{no} answers.

\begin{table}[t]
\centering
\caption{Attack detection performance on HotpotQA and MuSiQue. The
detection threshold $\tau$ was calibrated on the development set for
a 1\% FPR. 
Metrics are reported on the test set as the bootstrap
mean $\pm$ standard deviation over 1,000 resamples.
Bold highlights best performance per column.}
\label{tab:exp1-detections}
\small
\resizebox{\linewidth}{!}{%
\begin{tabular}{@{}lcccc@{}}
\toprule
\multirow{2}{*}{\textbf{Method}}
  & \multicolumn{2}{c}{\textbf{HotpotQA}}
  & \multicolumn{2}{c}{\textbf{MuSiQue}} \\
\cmidrule(lr){2-3} \cmidrule(l){4-5}
  & \textbf{AUROC (\%)} $\uparrow$
  & \textbf{Recall@1\%FPR (\%)} $\uparrow$
  & \textbf{AUROC (\%)} $\uparrow$
  & \textbf{Recall@1\%FPR (\%)} $\uparrow$ \\
\midrule
\multicolumn{5}{@{}l}{\textit{Text-Only Filters}} \\
TextFilter (RoBERTa)
  & $55.8 \pm 0.4$ & $0.69 \pm 0.08$
  & $51.0 \pm 0.8$ & $1.27 \pm 0.20$ \\
LlamaGuard-2-8B
  & $54.0 \pm 0.4$ & $1.53 \pm 0.12$
  & $52.7 \pm 0.7$ & $1.29 \pm 0.21$ \\
LlamaGuard-3-8B
  & $58.0 \pm 0.4$ & $2.12 \pm 0.15$
  & $55.9 \pm 0.7$ & $0.73 \pm 0.15$ \\
LLM-as-a-Judge (Llama-3-8B)
  & $50.0 \pm 0.3$ & $0.00 \pm 0.00$
  & $49.9 \pm 0.6$ & $0.00 \pm 0.00$ \\
\midrule
\multicolumn{5}{@{}l}{\textit{Baseline Methods}} \\
NaiveDensity
  & $90.4 \pm 0.2$ & $20.12 \pm 0.41$
  & $75.4 \pm 0.6$ & $2.37 \pm 0.28$ \\
GraphAvgWeight$(k=4)$
  & $84.3 \pm 0.3$ & $11.53 \pm 0.33$
  & $65.0 \pm 0.7$ & $1.40 \pm 0.22$ \\
\midrule
\multicolumn{5}{@{}l}{\textit{Graph-Theoretic Methods (Ours)}} \\
TopoGuard-$\lambda_2(k=4)$
  & $93.2 \pm 0.2$ & $26.55 \pm 0.46$
  & $\mathbf{80.8 \pm 0.6}$ & $5.73 \pm 0.41$ \\
TopoGuard-Cond$(k=4)$
  & $93.6 \pm 0.2$ & $29.60 \pm 0.47$
  & $80.1 \pm 0.6$ & $5.53 \pm 0.40$ \\
TopoGuard-Modularity$(k=4)$
  & $94.0 \pm 0.2$ & $\mathbf{35.09 \pm 0.51}$
  & $80.5 \pm 0.6$ & $\mathbf{7.26 \pm 0.46}$ \\
TopoGuard-$\lambda_2$+Entity$(\alpha{=}0.4,k{=}4)$
  & $\mathbf{95.2 \pm 0.1}$ & $32.63 \pm 0.49$
  & $\mathbf{80.8 \pm 0.6}$ & $5.56 \pm 0.41$ \\
\bottomrule
\end{tabular}%
}
\end{table}

\subsection{Split-knowledge Attack Detection}
\label{sec:attack-detection}
\textbf{Experiment Setup.} 
We evaluate our detection methods on two multi-hop QA datasets: HotpotQA~\cite{yang2018hotpotqa} (2 hops), with 1k development and 10k test, and MuSiQue~\cite{trivedi2022musique} (2-4 hops), with 500 development and 3k test examples.
Both datasets are designed for compositional retrieval that matches our threat model.

To construct a balanced adversarial set, we replace one document per query with a topically relevant but semantically disconnected document. 
Each retrieved document is decomposed into sentences, and a semantic similarity graph is built on it.
To set a universal comparison across different settings, each query's retrieval context contains $R=$6 sentences (3 sentences from each supporting document). 

Table \ref{tab:exp1-detections} shows the results of attack detection. 
Across both datasets, safeguard models LlamaGuard 2, LlamaGuard 3, and LLM-as-a-Judge (Llama-3-8B-Instruct) have an AUROC of approximately 50\%, which indicates nearly random detection. 
This confirms our hypothesis that text-based content filters cannot extract compositional adversarial signal even from the full concatenated retrieval context, because the signal lives in the topological structure of inter-document relationships rather than in lexical content.
In contrast, our graph-theoretic methods reliably expose the structural signature of split-knowledge attacks.
On HotpotQA, TopoGuard-$\lambda_2$+Entity achieves the highest AUROC ($95.2\%$) and TopoGuard-Modularity achieves the highest recall ($35.1\%$) at a strict $1\%$ FPR, a $\mathbf{21\times}$ improvement over LlamaGuard-2-8B ($1.53\%$ recall, AUROC $54.0\%$).
TopoGuard-$\lambda_2$ and TopoGuard-Conductance show stable performance (AUROC $93.2\%$ and $93.6\%$), confirming that the spectral gap and Fiedler conductance are robust indicators of topological fragmentation.
The four TopoGuard detectors are complementary: $\lambda_2$+Entity is our recommended default (best AUROC with balanced recall and robustness, see Section~\ref{sec:pipeline}).
Modularity achieves the highest raw recall without relying on named-entity information, making it a strong option when the entity signal is unreliable.
TopoGuard-$\lambda_2$ and Conductance come with concentration bounds (Theorem~\ref{thm:detection}) and are preferred when provable safety guarantees matter. 


\textbf{Remarks.} 
MuSiQue is a dataset specifically designed for highly complex multi-hop (2-4 hops) reasoning. 
The legitimate queries in MuSiQue naturally require jumping between very different topics, so telling them apart from an adversarial attack is significantly harder. 
The overall detection rates drop for every method but the performance gap between topological and naive graph baselines widens. 
A quantitative analysis showing that MuSiQue Q4 safe queries are structurally indistinguishable from attacks is provided in Appendix~\ref{app:musique}.

\subsection{False Positive Rate on Benign Queries and Multi-Hop Inference}
\label{sec:fpr}

\textbf{Experiment Setup.} We evaluate false positive rates on legitimate multi-hop queries to ensure production viability. 
We extract 2,474 benign bridge questions from HotpotQA validation and 1,180 benign multi-hop questions from MuSiQue validation. 
For each query, we construct the semantic similarity graph from retrieved documents and compute detection scores using thresholds frozen from Section~\ref{sec:attack-detection}.
To isolate cross-domain sensitivity, we partition queries into quartiles based on the ``document gap'', which is the cosine distance between supporting document embeddings.
Q4 represents the hardest and most semantically distant cross-domain but legitimate queries.

\begin{table}[t]
\centering
\caption{False Positive Rate (\%) on benign multi-hop queries. Q4 contains the 25\% of queries with the highest document gap (most semantically distant legitimate retrievals).}
\label{tab:exp2-fpr}
\small
\begin{tabular}{@{}lcccc@{}}
\toprule
\multirow{2}{*}{\textbf{Method}}
  & \multicolumn{2}{c}{\textbf{HotpotQA}}
  & \multicolumn{2}{c}{\textbf{MuSiQue}} \\
\cmidrule(lr){2-3} \cmidrule(l){4-5}
  & \textbf{Overall} $\downarrow$ & \textbf{Q4 (High)} $\downarrow$
  & \textbf{Overall} $\downarrow$ & \textbf{Q4 (High)} $\downarrow$ \\
\midrule
\multicolumn{5}{@{}l}{\textit{Text-Only Filters}} \\
TextFilter (RoBERTa)       & 0.93 & 1.29 & 3.64 & 3.39 \\
LlamaGuard-2-8B            & 1.41 & 1.29 & 0.93 & 1.02 \\
LlamaGuard-3-8B            & 1.13 & 1.94 & 0.34 & 1.02 \\
LLM-as-a-Judge (Llama-3-8B) & 0.00 & 0.00 & 0.00 & 0.00 \\
\midrule
\multicolumn{5}{@{}l}{\textit{Baseline Methods}} \\
GraphAvgWeight$(k{=}4)$ & 1.29 & 4.85 & 0.25 & 0.34 \\
NaiveDensity            & 1.41 & 5.33 & 0.51 & 1.69 \\
\midrule
\multicolumn{5}{@{}l}{\textit{Graph-Theoretic Methods (Ours)}} \\
TopoGuard-Cond$(k{=}4)$                      & 0.32 & 1.29 & 1.02 & 3.39 \\
TopoGuard-$\lambda_2(k{=}4)$                 & 0.40 & 1.62 & 1.02 & 3.39 \\
TopoGuard-$\lambda_2$+Entity$(\alpha{=}0.4)$ & 0.49 & 1.94 & 0.85 & 3.39 \\
TopoGuard-Modularity$(k{=}4)$                & 0.81 & 3.23 & 1.36 & 5.42 \\
\bottomrule
\end{tabular}
\end{table}

Table \ref{tab:exp2-fpr} presents the False Positive Rate (FPR) of our methods evaluated on benign multi-hop queries.
Across both datasets, the overall FPR remains low enough (at most $1.41\%$).
TopoGuard-Conductance achieves the lowest topological FPR on HotpotQA ($0.32\%$), while the hybrid TopoGuard-$\lambda_2$+Entity achieves the lowest on MuSiQue ($0.85\%$) among the TopoGuard variants.
However, text-based filters over-flag these safe queries with $0.9$–$1.4\%$ FPR on HotpotQA and $3.6\%$ on MuSiQue.
In addition, breaking down the false positives by semantic distance (document gap) reveals the geometric reality behind these false positives.
For standard queries (Q1–Q3), our topological methods achieve 0\% FPR on HotpotQA and under $0.4\%$ on MuSiQue (see Appendix~\ref{app:fpr-full} for full results), safely recognizing natural reasoning chains.
Furthermore, the Q4 edge case highlights the practical necessity of our formal certificates.
On HotpotQA's most challenging cross-domain queries, TopoGuard-Conductance ($1.29\%$ FPR) strongly outperforms Modularity ($3.23\%$ FPR).
The hybrid TopoGuard-$\lambda_2$+Entity lands between the two ($1.94\%$ FPR), trading a small FPR increase for substantially higher recall (Table~\ref{tab:exp1-detections}).
While Modularity offers the highest raw recall, the recommended production configuration is the hybrid TopoGuard-$\lambda_2$+Entity (best AUROC, strongest adaptive-adversary robustness in Section~\ref{sec:pipeline}); 

On MuSiQue Q4, all methods except LlamaGuard-2 achieve higher FPRs. 
However, due to the low recall ($1.29\%$) of LlamaGuard-2 in Section~\ref{sec:attack-detection}, the model is not actually distinguishing between safe and malicious contexts.
Overall, even the sensitivity on cross-domain (Q4) queries is slightly higher, TopoGuard methods remain the best defense for complex RAG systems.

\subsection{Transferability, Efficiency, and Robustness}
\label{sec:pipeline}

\begin{figure*}[t]
\centering
    \includegraphics[width=0.7\textwidth]{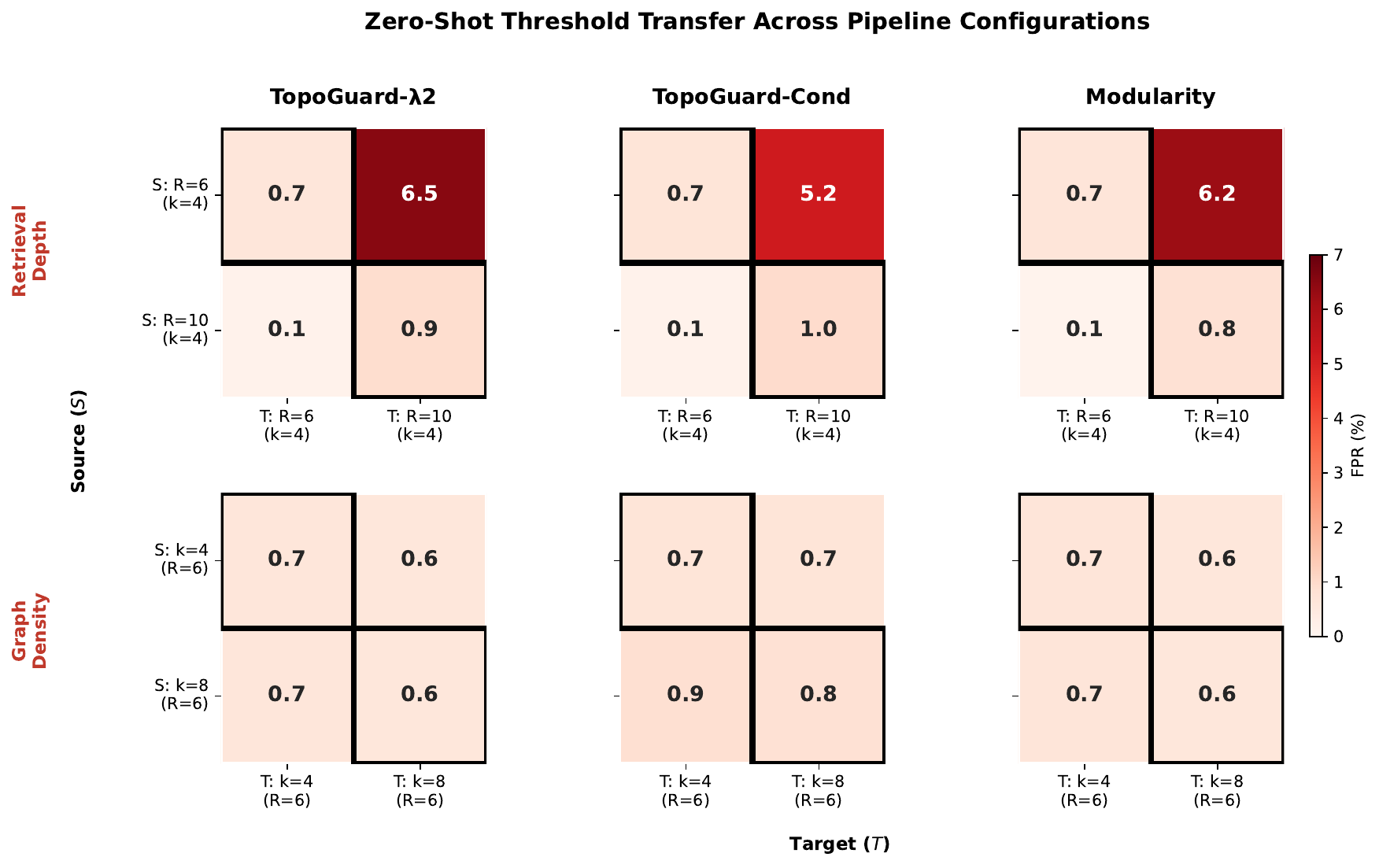}
   \caption{The heatmap shows FPR (\%) when a detector calibrated on source system $S$ (1\% FPR target) is deployed on target system $T$ without recalibration. 
   Top row: $R$. Bottom row: $k$.
    }
\label{fig:transfer}
\end{figure*}

\textbf{Transferability across Configurations.}
In practice, RAG systems vary in their retrieval configurations as different models retrieve different numbers of documents with varying quality.
Here, we evaluate whether our TopoGuard detectors transfer across retrieval depth $R$ (the total number of sentences in the retrieval context) and graph density $k$ ($k$-NN value in semantic similarity graph), or whether they require per-configuration tuning.
We use $R=6$ (3 sentences from 2 documents each) and $k=4$ as the base case for transfer testing.
By varying $R \in \{6, 10\}$ and $k \in \{4,8\}$, the system can simulate pipelines with different configuration.
Figure~\ref{fig:transfer} presents the zero-shot transferability matrices for TopoGuard-$\lambda_2$, TopoGuard-Conductance, and the Modularity variants.

Based on the result, we notice a logical asymmetry when changing the context depth ($R$).
If a detector is calibrated on $R=6$ but encounters richer contexts in deployment ($R=10$), the FPR inflates to around $5.2\%$–$6.5\%$.
This is not a flaw in any single metric but reflects a structural fact, i.e., increasing $R$ shifts the graph's degree distribution, which invalidates any threshold calibrated at a lower depth.
TopoGuard-Cond is the most stable in the worst case ($5.2\%$ FPR).
This is consistent with its localized cut formulation being less sensitive to retrieval depth shifts than the global spectral gap.
Transferring from long to short contexts makes the filter highly conservative, dropping the FPR to just $0.1\%$.
On the other hand, the system shows high stability across graph densities ($k$).
Adjusting the size of $k$ has hardly any effect on the transferred FPR.
For instance, a model calibrated on $k=4$ but deployed on $k=8$ (at a fixed context length) maintains a tightly excellent FPR of under 1.0\% across all cases, which means that $k$ can be tuned without pausing to recalibrate the safety filter.


\begin{figure}[tpb]
    \centering
    \includegraphics[width=0.45\textwidth]{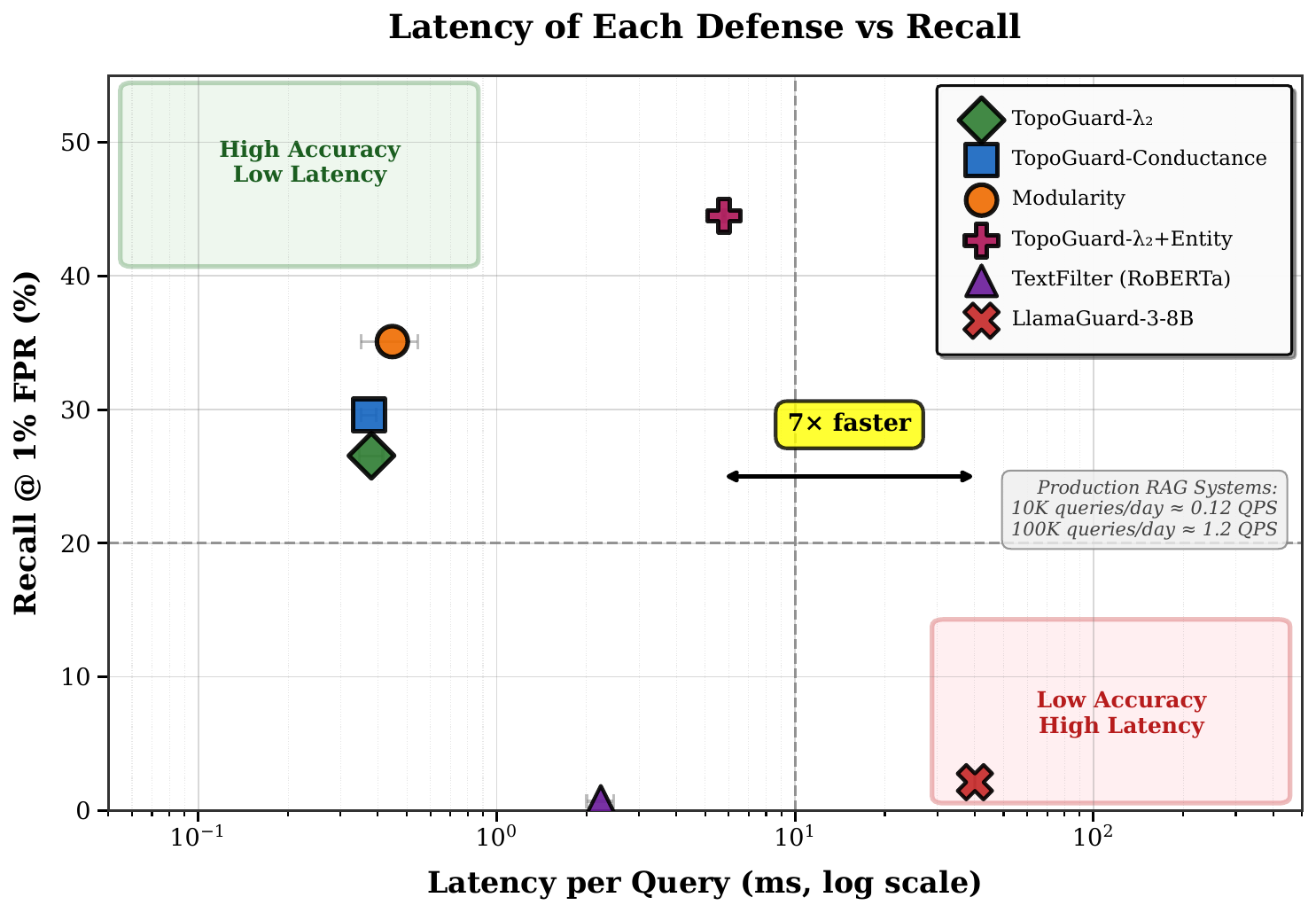}
    \caption{The latency of each defense variant against its recall.}
    \label{fig:latency}
\end{figure}

\textbf{Computational Efficiency.}
Figure~\ref{fig:latency} plots the latency of each defense variant against its recall. 
LlamaGuard-3-8B is too heavy for real-time use as it takes $\sim$40~ms per query while catching almost no attacks. 
TopoGuard variants replace LLMs with lightweight graph math but secure much higher recall.
As shown in the figure, all of the TopoGuard spectral variants reside in the top-left corner, representing the high performance and low latency.
E.g., the TopoGuard-$\lambda_2$ variant runs in under $0.5$~ms, over $100\times$ faster than LlamaGuard-3. 

\begin{table}[t]
\centering
\caption{Robustness to adaptive evasion attacks. Recall (\%) at fixed threshold $\tau$. 
\textbf{Bold} and \underline{underline} denote the best and second-best results per column, respectively.}
\label{tab:adaptive_attacks}
\small
\begin{tabular}{@{}lccccc@{}}
\toprule
\textbf{Method} & \textbf{Baseline} & \textbf{Random} & \textbf{Best-of-$N$} & \textbf{Chain} & \textbf{SameCom} \\
\midrule
TextFilter                                   &  1.4 &  1.6 &  1.4 &  1.5 &  0.8 \\
NaiveDensity                                 & 20.1 & \textbf{69.9} & 13.4 & 10.5 & \textbf{2.8} \\
\midrule
TopoGuard-Modularity                                & \textbf{35.1} & 54.8 &  7.2 & 22.0 &  0.0 \\
TopoGuard-$\lambda_2$                               & 26.6 & 42.7 & \underline{22.2} & \underline{36.8} &  1.3 \\
TopoGuard-Cond                                      & 29.5 & 27.1 & 12.2 & 17.2 &  1.2 \\
TopoGuard-$\lambda_2$+Entity $(\alpha{=}0.4)$       & \underline{33.0} & \underline{56.8} & \textbf{37.5} & \textbf{53.3} &  \underline{1.4} \\
\bottomrule
\end{tabular}
\end{table}

\textbf{Robustness to Adaptive Adversaries.}
To evaluate the robustness of TopoGuard in an adversarial setting, we further consider a white-box adversary with full knowledge of the TopoGuard defense and its calibrated thresholds.
The attacker's goal is to minimize detection recall by selectively poisoning the retrieval corpus with disconnected document subsets ($D_A$ and $D_B$). 
We evaluate four adaptive evasion strategies against the non-adaptive baseline.
Each strategy below modifies this attack to evade detection.
(i) Random padding appends 3 sentences from a random benign context to inflate edge counts.
(ii) Best-of-$N$ padding $N{=}32$ samples candidate connector documents and selects the one that maximizes similarity to both $D_A$ and $D_B$.
(iii) Chain mimicry inserts documents whose embeddings lie between $D_A$ and $D_B$.
It imitates multi-hop reasoning at the embedding level.
(iv) Same-community mimicry replaces $D_B$ with a document 
drawn from the top-256 most similar to $D_A$.
It hides the attack within a single semantic cluster instead of across two disconnected clusters.

Table~\ref{tab:adaptive_attacks} shows detection recall under each adversarial strategy.
TopoGuard-Modularity achieves the highest base recall (35.1\%) and even outperforms in random padding (54.8\%) as extra nodes amplify community structure.
But it collapses under Best-of-$N$ to 7.2\%.
The reason is that community detection is more sensitive to structural manipulation than spectral methods because optimized padding directly suppresses inter-community edges.

Among spectral detectors (i.e., $\lambda_2$, Cond. and $\lambda_2+$Entity), $\lambda_2$ is the most robust under adversarial attacks since connectivity measures global graph properties rather than a local cut, making it harder to fool. 
Under Best-of-$N$, $\lambda_2$ holds 22.2\% recall versus 12.2\% (Conductance), 7.2\% (Modularity), whereas under Chain mimicry, it improves recall to 36.8\% as the fake bridge nodes become part of the detectable anomaly rather than hiding it.
The hybrid TopoGuard-$\lambda_2$+Entity ($\alpha{=}0.4$) is the only method that stays competitive across every adaptive attack, winning the two hardest columns in Best-of-$N$ and Chain.
Specifically, Chain mimicry backfires against the hybrid because embedding-level bridges fail to share named entities with the attack endpoints, which is exactly the asymmetry the entity signal flags.
Therefore, we recommend the hybrid as the default for adaptive settings.

\textbf{Remarks.}
All methods fail under the same-community mimicry.
When the attack is drawn from the same semantic cluster as the legitimate document, the structural anomaly vanishes and recall drops to roughly 1\% across all topological defenses (0\% for Modularity).
This is a fundamental limit on topological defenses that is worth future investigation.
However, NaiveDensity catches a slightly higher fraction (2.8\%) than the hybrid (1.4\%). 
This suggests that density and spectral signals are partially complementary in this attack scenario, and TopoGuard should be a layer in a defense-in-depth stack: necessary against compositional attacks invisible to per-document filters, but insufficient against adversaries who can semantically align malicious content with the legitimate cluster.
\textit{Therefore, combining TopoGuard with existing retrieval-score consistency checks is a direction for future work.}

\section{Inspiration on Deployment Considerations}
\label{sec:deployment}
We synthesize the findings of this work into deployment guidance for production RAG systems.

\textbf{Operating point and false-positive handling.}
At 1\% FPR, a system serving 1M queries per day flags 10K legitimate queries.
Blocking all flagged queries is too aggressive for most deployments, so a softer response policy is needed.
We suggest three options.
\emph{Soft warning}: prompt the LLM to hedge cross-document claims.
\emph{Retrieval diversification}: re-retrieve flagged contexts with tighter similarity constraints.
\emph{Escalation}: route flagged queries to a heavier-weight checker such as an LLM-as-a-Judge.
Although escalation adds much more cost for LLMs, these expensive check runs on $\sim$1\% of traffic instead of 100\%, while TopoGuard handles the rest at sub-millisecond cost.

\textbf{Defense-in-depth composition.}
TopoGuard complements per-document filters but does not replace them.
Per-document filters catch single-passage attacks like PoisonedRAG, where the adversarial content sits in a single document (Appendix~\ref{app:poisonedrag}-Related Works).
TopoGuard catches compositional attacks, where the payload sits in the structure across documents and is invisible to per-document scoring.
The two attack classes occupy orthogonal regions of the threat space.
PoisonedRAG places adversarial passages inside the legitimate cluster (high conductance), where per-document checks have signal but TopoGuard does not.
Split-knowledge attacks place them outside the cluster (low conductance), where TopoGuard has a signal but per-document checks do not.

\textbf{Recalibration triggers.}
Section~\ref{sec:pipeline} shows that $k$ can be tuned freely, but $R$ cannot. 
The sentence embedding model used to construct the similarity graph, i.e., the encoder, is also a sensitive component.
Substituting across three encoder families on HotpotQA preserves AUROC in a tight range (91.5--95.2\%; Appendix~\ref{app:encoder-ablation}), but the threshold $\tau$ must be recalibrated per encoder because absolute score distributions shift.
Operationally, any change to system configurations ($R$, the embedding model, or the retriever) should trigger re-calibration on a fresh sample of benign queries.

\textbf{Calibration without attack labels.}
\label{sec:self-supervised}
A practical concern for deployment is that operators rarely have labeled split-knowledge attacks for their corpus. 
We demonstrate that operators can deploy TopoGuard without labeled attack data.
In extra experiments, we set $\tau$ at the 99th percentile of benign query scores, a standard threshold-calibration procedure for FPR-constrained binary classification that does not require attack labels. 
Across all four detectors and both datasets, this label-free calibration matches supervised recall within $\pm 0.07\%$ (Appendix~\ref{sec:self-supervised-appendix}), well below the bootstrap standard deviation ($\pm 0.4$-$0.5$\%). 
Thus, threshold calibration for a target FPR depends only on the negative class distribution, and attack labels are not strictly required. 
Our contribution is empirical as we verify that detection recall is preserved under this label-free procedure on real attack distributions.

\section{Conclusion}
\label{sec:conclusion}
In this work, we introduced split-knowledge attacks, a practical attack in RAG systems, where the retrieved individual benign documents combine to induce harmful associations.
We empirically verified that the split-knowledge attacks are effective against existing defense mechanisms because they are structurally invisible to per-document filters.
To fill the gap, TopoGuard is designed to detect the retrieved contexts by topological fragmentation, with provable detection guarantees based on the graph's spectral gap and conductance.
Extensive experiments are conducted on multiple baselines and two retrieval datasets.
Specifically, on HotpotQA, our TopoGuard catches approximately 21$\times$ more attacks than LlamaGuard at 1\% FPR, and the hybrid TopoGuard remains the most robust defense under different adaptive adversaries.
As RAG systems are widely deployed, TopoGuard pushes toward more reliable RAG by catching attacks that per-document filters miss at sub-millisecond cost and without labeled attacks.

\section{Limitations}
\label{sec:limitations}
As a frontier defense in this line of research, TopoGuard still exhibits several limitations that constrain its deployment in real-world settings.
First, it targets compositional attacks only.
On single-passage attacks like PoisonedRAG, where the payload sits in one document, our methods will not be a good fit (Appendix~\ref{app:poisonedrag}). 
Thus, we recommend deploying TopoGuard as a complement to per-document filters, not a replacement.
Second, an adversary who places malicious content within the same semantic cluster as the legitimate retrieval (same-community mimicry, Section~\ref{sec:exp}) eliminates the structural anomaly TopoGuard relies on, dropping recall to roughly 1\%.
This is a universal limitation on topological defenses generally.
Future work should pair TopoGuard with retrieval-score consistency checks or query-conditioned detection to close these gaps. 

\bibliographystyle{plain} 
\bibliography{references}

\appendix

\section{Threat Model Validation}
\label{app:threat-validation}
\subsection{Experimental Setup}
We generate 50 adversarial examples using GPT-4o-mini across five domains: science, geography, history, technology, and business. 
Each example contains a multi-hop question, a correct answer, a target false answer, and two document sets (3 documents each, 2-3 sentences per document) generated by GPT-4o mini~\cite{openai2024gpt4omini}.
Legitimate documents form a coherent reasoning chain supporting the correct answer.
Adversarial documents follow a three-part structure: (1) entity introduction without revealing the correct answer, (2) false answer description in isolation, and (3) a semantic bridge creating domain overlap between the two. 
All documents are factually accurate; the adversarial signal exists solely in their composition.
We evaluate using \texttt{Llama-3-8B-Instruct}~\cite{grattafiori2024llama} with deterministic decoding (temperature $0.0$, \texttt{max\_new\_tokens=150}) and a system prompt instructing the model to answer only from the provided documents. 
A response is counted as an attack success if the false answer string appears in the model output.

\textbf{Key Findings.}
Llama-3-8B-Instruct achieves 96\% accuracy on legitimate contexts (48/50). Under attack, ASR reaches 34\% (17/50, 95\% CI: [21\%, 47\%], $p < 0.001$), while the model declines to answer in 36\% of cases. 
Out of the 17 successful attacks, 13 (76.5\%) involve false entity association and 4 (23.5\%) involve spurious reasoning, classified by whether the model output contained causal connectives or exceeded 50 tokens. 
These results confirm that split-knowledge attacks pose a genuine threat to safety-tuned RAG systems, and that text-based moderation alone is insufficient. 
We note that this experiment validates compositional manipulation in isolation and end-to-end validation with a live retrieval corpus is left for future work.

\section{Hyperparameter Validation}
\label{sec:hyperparameters}

\subsection{Hybrid Weight Selection ($\alpha$ Sensitivity)}
\label{app:alpha_sweep}

The hybrid combines spectral and entity signals as $s = \alpha \cdot s_{\lambda_2} + (1-\alpha) \cdot s_{\text{entity}}$.
We sweep $\alpha$ on HotpotQA using the same calibration protocol as Table~\ref{tab:exp1-detections}: $\tau$ is calibrated on full DEV at the $1\%$ FPR target, and metrics are computed on the held-out 10k test set with bootstrap CIs over 1000 resamples. 
Results are in Table~\ref{tab:alpha_sweep}.

On standard attacks, recall peaks near $\alpha=0.25$ (AUROC $0.955$,Recall@$1\%$FPR $39.60\%$). We nonetheless select $\alpha=0.40$ because it dominates $\alpha=0.25$ on every adaptive adversary
(Table~\ref{tab:alpha_adaptive}). 
It also has lower benign FPR on multi-hop queries ($0.49\%$ vs $0.85\%$ overall; $1.94\%$ vs $3.39\%$ on the high-doc-gap Q4 slice; Table~\ref{tab:exp2-fpr}). 
The trade-off is a small loss in raw recall on standard attacks for substantially better adaptive robustness and benign FPR. This matches the production deployment goals stated in Section~\ref{sec:deployment}.

The optimum at $\alpha=0.40$ places $60\%$ of the weight on entity overlap, indicating entity disjointness is the stronger individual signal for split-knowledge attacks. 
Pure entity ($\alpha = 0$) cannot land precisely at $1\%$ FPR because Jaccard scores take only a few discrete values; the spectral term contributes a continuous score that lets the combined detector hit the operating point exactly. 
Performance is stable across a wide range as AUROC stays above $0.94$ for every $\alpha \in [0.10, 0.75]$.

\begin{table}[h]
\centering
\caption{$\alpha$ sensitivity on HotpotQA. $\tau$ is calibrated on full DEV atthe 1\% FPR target. Attack metrics are reported on the held-out 10k test set with bootstrap mean $\pm$ std over 1000 resamples. Benign FPR is from the 2474 benign multi-hop queries in Experiment~\ref{sec:fpr}.
Recall and FPR valuesare in percent. Bold marks the selected $\alpha=0.40$. $\alpha=1$ is pure $\lambda_2$; $\alpha=0$ is pure entity overlap.}
\label{tab:alpha_sweep}
\begin{tabular}{lccccc}
\toprule
$\alpha$ & AUROC & Recall@1\%FPR & Attack FPR & Benign FPR & Benign Q4 FPR \\
\midrule
0.00          & $0.836 \pm 0.003$         & $0.00 \pm 0.00$           & 0.00          & 0.00          & 0.00          \\
0.10          & $0.943 \pm 0.002$         & $39.09 \pm 0.51$          & 0.68          & 0.85          & 3.39          \\
0.25          & $0.955 \pm 0.001$         & $39.60 \pm 0.52$          & 0.68          & 0.85          & 3.39          \\
\textbf{0.40} & $\mathbf{0.952 \pm 0.001}$ & $\mathbf{32.63 \pm 0.49}$ & \textbf{0.44} & \textbf{0.49} & \textbf{1.94} \\
0.50          & $0.949 \pm 0.001$         & $31.09 \pm 0.48$          & 0.40          & 0.49          & 1.94          \\
0.60          & $0.946 \pm 0.002$         & $31.26 \pm 0.48$          & 0.42          & 0.57          & 2.26          \\
0.75          & $0.941 \pm 0.002$         & $30.38 \pm 0.48$          & 0.44          & 0.40          & 1.62          \\
0.90          & $0.936 \pm 0.002$         & $26.44 \pm 0.46$          & 0.42          & 0.40          & 1.62          \\
1.00          & $0.933 \pm 0.002$         & $26.55 \pm 0.46$          & 0.43          & 0.40          & 1.62          \\
\bottomrule
\end{tabular}
\end{table}

\begin{table}[h]
\centering
\caption{Adaptive adversary recall ($\%$) at fixed threshold $\tau$ for
$\alpha=0.40$ vs $\alpha=0.25$. Base coloum is non-adaptive attack from Experiment 4}
\label{tab:alpha_adaptive}
\begin{tabular}{lccccc}
\toprule
$\alpha$ & Base & Random & Best-of-$N$ & Chain & SameCom \\
\midrule
$0.25$         & $22.5$          & $38.0$          & $24.1$          & $35.9$          & $1.1$ \\
$\mathbf{0.40}$ & $\mathbf{33.0}$ & $\mathbf{56.8}$ & $\mathbf{37.5}$ & $\mathbf{53.3}$ & $\mathbf{1.4}$ \\
\midrule
$\Delta$ ($0.40 - 0.25$) & $+10.5$ & $+18.8$ & $+13.4$ & $+17.4$ & $+0.3$ \\
\bottomrule
\end{tabular}
\end{table}

\subsection{\textit{K}-NN Graph Construction (\textit{k} Sensitivity)}
\label{app:k-validation}

We validate the choice of $k$ for $k$-nearest-neighbor graph construction  by sweeping $k \in \{2, 4, 6, 8, 10\}$ and measuring development-set  AUC and score separation (attack mean minus safe mean) for both TopoGuard-$\lambda_2$ and TopoGuard-Conductance. 
Selection is performed  on the dev set only; the test set is held out for final evaluation. Results are shown in Figure~\ref{fig:k_sweep}.

Both AUC metrics improve sharply from $k=2$ to $k=4$ (TopoGuard-$\lambda_2$: 0.891 → 0.932; TopoGuard-Conductance: 0.889 → 0.935), then plateau. 
At $k=2$, the graph is too sparse: many nodes are weakly connected, which obscures the structural signal. 
By $k=4$, the graph captures enough local connectivity for both detectors to reliably separate attacks from safe contexts. 
Beyond $k=4$, AUC changes by less than $0.005$ for both detectors, indicating saturation. 
We therefore select $k=4$ as it achieves the highest dev AUC at minimal 
computational cost. 
All main experiments use this value.

\begin{figure}[h]
    \centering
    \includegraphics[width=\linewidth]{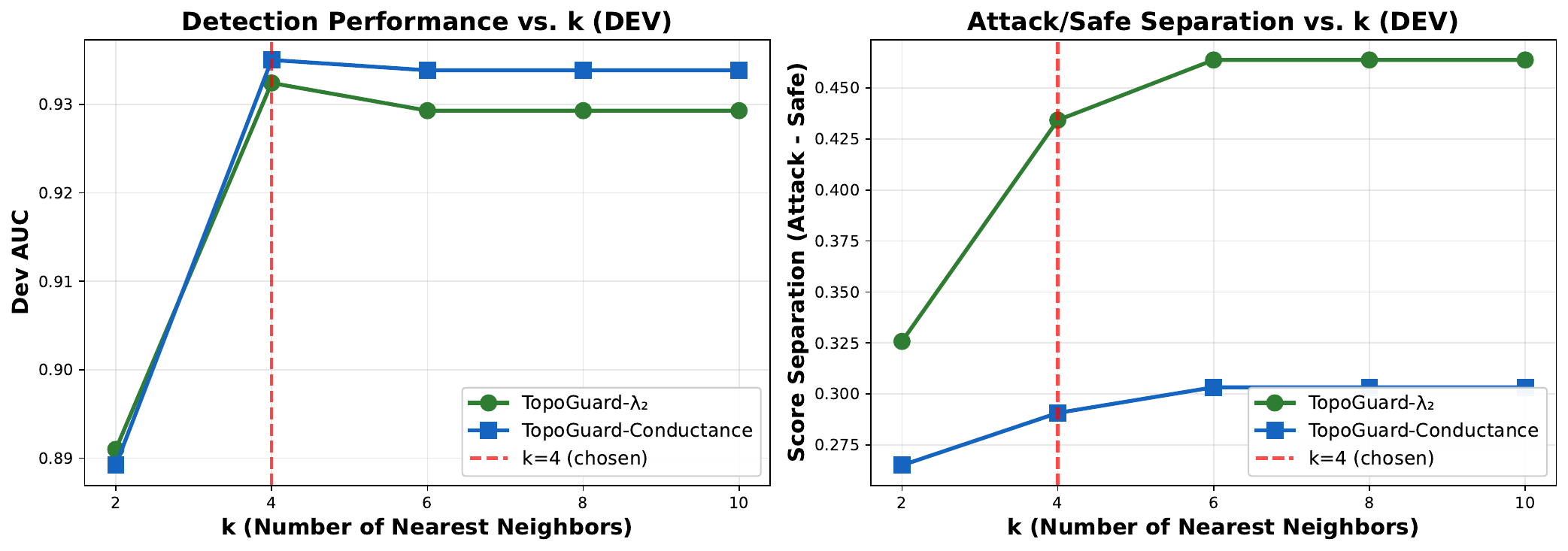}
    \caption{Sensitivity of TopoGuard-$\lambda_2$ and TopoGuard-Conductance 
    to the number of nearest neighbors $k$, evaluated on HotpotQA dev. 
    \textbf{Left:} dev AUC. \textbf{Right:} score separation between attack 
    and safe contexts. Both metrics stabilize at $k=4$ (dashed red line), 
    which is used in all experiments. Selection performed on dev only; 
    test set held out.}
    \label{fig:k_sweep}
\end{figure}

\section{Related Work}
\label{app:related}

\textbf{RAG Security and Corpus Poisoning.}
PoisonedRAG~\citep{zou2025poisonedrag} demonstrates that injecting a  small number of adversarial passages into a retrieval corpus can reliably manipulate LLM outputs.
Subsequent work has explored gradient-based optimization of poisoned documents~\citep{zhong2023poisoning}, black-box corpus attacks~\citep{shafran2024machine}, and single-document attacks via LLM-generated chain-of-evidence reasoning~\citep{chang2025oneshot}. 
Beyond text-only RAG, GragPoison ~\citep{liang2025graphrag} demonstrates that graph-augmented RAG systems are vulnerable to attacks that inject competing relational claims into the underlying knowledge graph. 
Prompt injection attacks~\citep{perez2022ignore, greshake2023not} manipulate the model through the retrieved context itself rather than the corpus.
Our split-knowledge attack differs from all of these: each injected document is individually benign and passes standard content filters. 
The adversarial signal exists only in the composition of retrieved documents, making it invisible to any per-document defense. 
Table~\ref{tab:attack-comparison} summarizes this distinction.

\begin{table}[h]
\centering
\caption{Comparison of representative RAG attack threat models.}
\label{tab:attack-comparison}
\small
\setlength{\tabcolsep}{6pt}
\begin{tabular}{@{}lccc@{}}
\toprule
\textbf{Attack} & \textbf{Malicious passage?} & \textbf{Optimization?} & \textbf{Per-passage filter?} \\
\midrule
PoisonedRAG~\citep{zou2025poisonedrag}              & Yes    & Embedding       & Sometimes \\
Gradient poisoning~\citep{zhong2023poisoning}       & Yes    & Gradient        & Sometimes \\
AuthChain~\citep{chang2025oneshot}                  & Yes    & LLM-generated   & Sometimes \\
GragPoison~\citep{liang2025graphrag}                & Yes    & Relation-based  & Sometimes \\
Prompt injection~\citep{greshake2023not}            & Hidden & No              & Usually \\
\textbf{Split-Knowledge (Ours)}                     & \textbf{No}  & \textbf{Retrieval-only} & \textbf{Never} \\
\bottomrule
\end{tabular}
\end{table}

\textbf{Content Moderation and Safety Filters.}
LlamaGuard~\cite{metallamaguard2} and its successors are instruction-tuned LLMs that classify individual inputs as safe or unsafe.
RoBERTa-based hate speech detectors~\cite{vidgen2021dynabench} and Perspective API operate similarly on single text inputs.
We show empirically that LlamaGuard, RoBERTa-based detectors, and LLM-as-a-Judge approaches all achieve near-random performance 
(AUROC $\approx 0.5$) on split-knowledge attacks: the adversarial signal is structural, not lexical.

\textbf{Graph-Based Anomaly Detection.}
Spectral methods are standard in graph anomaly detection~\cite{akoglu2015graph}, typically on static graphs like social networks or citation networks where anomalies are individual nodes or subgraphs.
The Fiedler value and graph conductance have been used to detect community structure~\cite{newman2006modularity}.
Our setting differs: we build a fresh semantic similarity graph per 
query, and detection uses the entire graph's topology.
To our knowledge, we are the first to apply Cheeger-style certificates to retrieval safety.

\textbf{Retrieval-Augmented Generation.}
RAG grounds LLM outputs in retrieved documents~\citep{lewis2020rag, shuster2021retrieval}.
Multi-hop benchmarks like HotpotQA~\citep{yang2018hotpotqa} and MuSiQue~\citep{trivedi2022musique} require combining multiple documents per query.
Our threat model targets these systems: the need to combine multiple documents is exactly what creates the compositional vulnerability.

\textbf{Spectral Graph Theory and Cheeger Inequalities.}
Our theory builds on Cheeger's inequality~\citep{cheeger1970lower, chung1997spectral}, which relates the spectral gap to graph conductance.
Our proofs use Weyl's inequality~\citep{horn2012matrix} for perturbation and Matrix Bernstein concentration~\citep{tropp2012user}.
Louvain community detection~\citep{blondel2008louvain} underlies our modularity baseline.

\section{Extended Proofs}
\label{app:extended-proofs}

\subsection{Proof of Theorem~\ref{thm:stability} (Spectral Stability)}

Let $G$ be the symmetrized $k$-NN similarity graph from observed embeddings $\{\mathbf{x}_i\}$, and $G^*$ a reference graph from embeddings $\{\mathbf{x}_i^*\}$.

Assume:
\begin{enumerate}
\item Bounded noise: $\|\mathbf{x}_i - \mathbf{x}_i^*\|_2 \leq \sigma$ for all $i$.
\item $k$-NN margin condition: $\sigma$ is small relative to the gap between each point's $k$-th and $(k{+}1)$-th nearest-neighbor distances, so that $G$ and $G^*$ share the same edge set.
\item Bounded symmetrized degree: each node has at most $ck$ neighbors for some constant $c$.
\end{enumerate}

Then the spectral gap satisfies:
$$|\lambda_2(G) - \lambda_2(G^*)| = O\!\left(\frac{k\sigma}{\delta_{\min}^{2}}\right),$$
where $\delta_{\min}$ is the minimum weighted node degree.

\textit{On Assumption 3.} The symmetrized $k$-NN graph can in principle have hub nodes with degree $O(n)$. 
Assumption 3 requires the embedding distribution to have bounded doubling dimension. 
On real sentence embeddings this is empirically benign, with observed maximum degrees within a small constant of $k$.

\begin{proof}
We bound the operator-norm perturbation $\|L - L^*\|_2$ of the normalized Laplacian, then apply Weyl's inequality~\citep{horn2012matrix} to bound the eigenvalue perturbation $|\lambda_2(G) - \lambda_2(G^*)|$. 
The proof proceeds in four steps: 
(i) bound the edge-weight perturbation $\|W - W^*\|$; 
(ii) bound the degree perturbation $\|D - D^*\|$; (iii) bound the inverse-square-root perturbation $\|D^{-1/2} - (D^*)^{-1/2}\|$; 
(iv) combine these into the Laplacian bound $\|L - L^*\|$.

\textbf{Edge-weight perturbation.}
For unit-norm embeddings, by Cauchy-Schwarz and Assumption 1:
\begin{equation}
|\langle \mathbf{x}_i, \mathbf{x}_j \rangle - \langle \mathbf{x}_i^*, \mathbf{x}_j^* \rangle| \leq 2\sigma.
\label{eq:cosine-perturbation}
\end{equation}
By Assumption 2 (margin condition), $G$ and $G^*$ share the same edge set, so $W - W^*$ is nonzero only on edges. 
The clipping 
$w_{ij} = \max(0, \langle \mathbf{x}_i, \mathbf{x}_j \rangle)$ is 1-Lipschitz, so the bound in (\ref{eq:cosine-perturbation}) applies entrywise to $W - W^*$.

By Assumption 3, each row of $W - W^*$ has at most $ck$ nonzero entries, each bounded by $2\sigma$. 
Using the standard row-sum bound on the operator norm of symmetric matrices, 
$\|M\|_2 \leq \|M\|_\infty := \max_i \sum_j |M_{ij}|$:
\begin{equation}
\|W - W^*\|_2 \;\leq\; \|W - W^*\|_\infty \;\leq\; ck \cdot 2\sigma \;=\; 2ck\sigma.
\label{eq:W-perturbation}
\end{equation}

\textbf{Degree perturbation.}
The weighted degree of node $i$ is $D_{ii} = \sum_j W_{ij}$, so by the triangle inequality and (\ref{eq:W-perturbation}):
\begin{equation}
|D_{ii} - D_{ii}^*| = \left| \sum_j (W_{ij} - W_{ij}^*) \right| \leq \sum_j |W_{ij} - W_{ij}^*| \leq 2ck\sigma.
\label{eq:degree-entrywise}
\end{equation}
Since $D - D^*$ is diagonal, $\|D - D^*\|_2 = \max_i |D_{ii} - D_{ii}^*| \leq 2ck\sigma$.

\textbf{Inverse-square-root perturbation.}
Consider $f(x) = 1/\sqrt{x}$ on $[\delta_{\min}, \infty)$. Its derivative is $f'(x) = -\tfrac{1}{2}x^{-3/2}$, so the Lipschitz constant on this interval is:
\begin{equation}
\sup_{x \geq \delta_{\min}} |f'(x)| = \tfrac{1}{2}\delta_{\min}^{-3/2}.
\label{eq:lipschitz}
\end{equation}
Since $D$ and $D^*$ are diagonal, applying $f$ entrywise and taking the operator norm:
\begin{equation}
\|D^{-1/2} - (D^*)^{-1/2}\|_2 
\leq \tfrac{1}{2}\delta_{\min}^{-3/2} \cdot \|D - D^*\|_2 
= O\!\left(\frac{k\sigma}{\delta_{\min}^{3/2}}\right),
\label{eq:Dinv-perturbation}
\end{equation}
combining (\ref{eq:lipschitz}) and (\ref{eq:degree-entrywise}).

\textbf{Laplacian perturbation.}
The normalized Laplacian is $L = I - D^{-1/2} W D^{-1/2}$. Let $A = D^{-1/2}$ and $A^* = (D^*)^{-1/2}$. We expand $L - L^*$ via the algebraic identity:
\begin{equation}
A W A - A^* W^* A^* = (A - A^*) W A + A^*(W - W^*) A + A^* W^* (A - A^*).
\label{eq:three-term}
\end{equation}

From a standard three-term telescoping,
$A W A - A^* W^* A^*$ as 
$(A W A - A^* W A) + (A^* W A - A^* W^* A) + (A^* W^* A - A^* W^* A^*)$, 

We bound each of the three terms. We use:
\begin{itemize}
\item $\|W\|_2 = O(1)$ (bounded weights $w_{ij} \in [0,1]$ with $O(k)$ nonzero entries per row, treating $k$ as a constant).
\item $\|A\|_2 = \|D^{-1/2}\|_2 \leq \delta_{\min}^{-1/2}$ (operator norm of a diagonal matrix is the max diagonal entry).
\item $\|A - A^*\|_2 = O(k\sigma / \delta_{\min}^{3/2})$ from (\ref{eq:Dinv-perturbation}).
\item $\|W - W^*\|_2 = O(k\sigma)$ from (\ref{eq:W-perturbation}).
\end{itemize}

Applying these bounds to the three terms of (\ref{eq:three-term}) using ($\|XY\|_2 \leq \|X\|_2 \|Y\|_2$):
\begin{align*}
\|(A - A^*) W A\|_2 &\leq \|A - A^*\|_2 \|W\|_2 \|A\|_2 = O\!\left(\tfrac{k\sigma}{\delta_{\min}^{3/2}}\right) \cdot O(1) \cdot O\!\left(\tfrac{1}{\delta_{\min}^{1/2}}\right) = O\!\left(\tfrac{k\sigma}{\delta_{\min}^{2}}\right),\\
\|A^*(W - W^*) A\|_2 &\leq \|A^*\|_2 \|W - W^*\|_2 \|A\|_2 = O\!\left(\tfrac{1}{\delta_{\min}^{1/2}}\right) \cdot O(k\sigma) \cdot O\!\left(\tfrac{1}{\delta_{\min}^{1/2}}\right) = O\!\left(\tfrac{k\sigma}{\delta_{\min}}\right),\\
\|A^* W^* (A - A^*)\|_2 &\leq \|A^*\|_2 \|W^*\|_2 \|A - A^*\|_2 = O\!\left(\tfrac{k\sigma}{\delta_{\min}^{2}}\right).
\end{align*}
For small $\delta_{\min}$, the first and third terms (scaling as 
$\delta_{\min}^{-2}$) dominate the second (scaling as 
$\delta_{\min}^{-1}$). By the triangle inequality:
\begin{equation}
\|L - L^*\|_2 \leq O\!\left(\frac{k\sigma}{\delta_{\min}^{2}}\right).
\label{eq:L-perturbation}
\end{equation}

\textbf{Conclusion.} Weyl's inequality~\citep{horn2012matrix} states that for symmetric matrices, $|\lambda_i(L) -\lambda_i(L^*)| \leq \|L - L^*\|_2$ for every eigenvalue index $i$. 
Applied to $i = 2$:

\begin{equation}
|\lambda_2(G) - \lambda_2(G^*)| \leq \|L - L^*\|_2 = O\!\left(\frac{k\sigma}{\delta_{\min}^{2}}\right),
\end{equation}
combining (\ref{eq:L-perturbation}) with Weyl's inequality.
\end{proof}

\subsection{Proof of Theorem~\ref{thm:detection} (Detection Certificate)}

Suppose reference attack graphs have conductance $\phi(G^*) \leq \delta$ and reference legitimate graphs have $\phi(G^*) \geq \epsilon$, with 
$\epsilon \geq 2\sqrt{\delta}$. 
Assume embedding noise components are sub-Gaussian with parameter $\sigma$ and the $k$-NN margin condition of Theorem~\ref{thm:stability}. 
Let $\tau_\lambda = \delta + \epsilon^2/4$ denote the threshold in $\lambda_2$-space, with corresponding score-space threshold $\tau = 1 - \min(\tau_\lambda, 1)$, and margin $\Delta = (\epsilon^2 - 4\delta)/4 > 0$.

Then:
\begin{align*}
\Pr_{D \sim \text{Attack}}[s(D) > \tau]
  &\geq 1 - \exp\!\left(-\Omega\!\left(\tfrac{n\Delta^2}{\sigma^2}\right)\right), \\
\Pr_{D \sim \text{Legit}}[s(D) > \tau]
  &\leq \exp\!\left(-\Omega\!\left(\tfrac{n\Delta^2}{\sigma^2}\right)\right).
\end{align*}
The first bound is the true positive rate (TPR) at threshold $\tau$. 
The second is the false positive rate (FPR).
Both improve exponentially with the number of retrieved units $n$.

\textit{On the concentration bound.}
The bound $\Pr(\|L - L^*\|_2 \geq t) \leq \exp(-\Omega(nt^2/\sigma^2))$ used in the proof treats the Laplacian perturbation as if it concentrated like a sum of independent random matrices. 
In a $k$-NN similarity graph, edges are coupled through degree normalization and neighbor selection, so the formal preconditions of Matrix Bernstein-type bounds do not apply directly.
We treat this as a heuristic concentration that captures the qualitative scaling. 
A fully rigorous concentration analysis under $k$-NN graph structure is left to future work.

\begin{proof}
We first translate the conductance bounds ($\phi \leq \delta$ for attacks, $\phi \geq \epsilon$ for legit) into bounds on $\lambda_2$ using Cheeger's inequality. 
The condition $\epsilon \geq 2\sqrt{\delta}$ guarantees a positive margin between attack and legitimate $\lambda_2$ regimes. 
Theorem~\ref{thm:stability} plus a heuristic concentration bound then gives exponential tail bounds on the probability that the noisy $\lambda_2(G)$ deviates from the ideal value by more than the margin. 
Finally, we convert these $\lambda_2$-space bounds to score-space bounds (FPR and TPR) using the monotonicity of the scoring function.

\textbf{Cheeger bounds on $\lambda_2$.}
Cheeger's inequality~\citep{chung1997spectral} relates the spectral gap to graph conductance via the two-sided bound $\phi^2/2 \leq \lambda_2 \leq 2\phi$. 
Applying both directions:
\begin{align}
\text{Attack regime:} \quad &\lambda_2(G^*) \leq 2\phi(G^*) \leq 2\delta, \label{eq:attack-lambda}\\
\text{Legitimate regime:} \quad &\lambda_2(G^*) \geq \phi(G^*)^2/2 \geq \epsilon^2/2. \label{eq:legit-lambda}
\end{align}

\textbf{Threshold and margin.}
The condition $\epsilon \geq 2\sqrt{\delta}$ rearranges to $\epsilon^2 \geq 4\delta$, which guarantees that the attack ceiling $2\delta$ lies strictly below the legitimate floor $\epsilon^2/2$:
\begin{equation}
2\delta < \delta + \frac{\epsilon^2}{4} < \frac{\epsilon^2}{2}.
\label{eq:tau-position}
\end{equation}
The threshold $\tau_\lambda = \delta + \epsilon^2/4$ is the midpoint of the gap $[2\delta, \epsilon^2/2]$, with equal margin to either side:
\begin{equation}
\Delta := \tau_\lambda - 2\delta = \frac{\epsilon^2}{2} - \tau_\lambda = \frac{\epsilon^2 - 4\delta}{4} > 0.
\label{eq:margin}
\end{equation}

\textbf{Detection failure events.}
By Theorem~\ref{thm:stability}, the observed $\lambda_2(G)$ deviates from the ideal $\lambda_2(G^*)$ by $|\lambda_2(G) - \lambda_2(G^*)| \leq \|L - L^*\|_2$. Combining with the regime bounds (\ref{eq:attack-lambda})--(\ref{eq:legit-lambda}):
\begin{itemize}
\item For an attack to be missed (false negative), we need 
$\lambda_2(G) > \tau_\lambda$. 
Since $\lambda_2(G^*) \leq 2\delta$ for attacks and $\tau_\lambda - 2\delta = \Delta$, this requires $\|L - L^*\|_2 > \Delta$.
\item For a legitimate query to be flagged (false positive), we need $\lambda_2(G) < \tau_\lambda$. Since $\lambda_2(G^*) \geq \epsilon^2/2$ for legit and $\epsilon^2/2 - \tau_\lambda = \Delta$, this also requires $\|L - L^*\|_2 > \Delta$.
\end{itemize}
Both failure events thus reduce to the same Laplacian-perturbation event:\begin{equation}
\{\|L - L^*\|_2 > \Delta\}.
\label{eq:failure-event}
\end{equation}

\textbf{Concentration.}
Standard concentration arguments on the operator-norm perturbation of the normalized Laplacian under sub-Gaussian embedding noise yield:
\begin{equation}
\Pr(\|L - L^*\|_2 \geq t) \leq \exp\!\left(-\Omega\!\left(\frac{nt^2}{\sigma^2}\right)\right).
\label{eq:concentration}
\end{equation}
Setting $t = \Delta$ and using the failure-event reduction (\ref{eq:failure-event}):
\begin{align}
\Pr_{D \sim \text{Attack}}[\lambda_2(G) > \tau_\lambda] 
  &\leq \exp\!\left(-\Omega\!\left(\frac{n\Delta^2}{\sigma^2}\right)\right), 
  \label{eq:lambda-FN}\\
\Pr_{D \sim \text{Legit}}[\lambda_2(G) < \tau_\lambda] 
  &\leq \exp\!\left(-\Omega\!\left(\frac{n\Delta^2}{\sigma^2}\right)\right).
  \label{eq:lambda-FP}
\end{align}

\textbf{Score-space conversion (FPR and TPR).}
The detector score is $s(D) = 1 - \min(\lambda_2(G), 1)$, which is monotone decreasing in $\lambda_2$. 
Equivalently, $\lambda_2(G) > \tau_\lambda \iff s(D) < 1 - \tau_\lambda = \tau$ (when $\tau_\lambda \leq 1$). 
Substituting into (\ref{eq:lambda-FN})-(\ref{eq:lambda-FP}):

\textit{True positive rate} ($\Pr[s(D) > \tau \mid \text{attack}]$):
\begin{equation}
\Pr_{D \sim \text{Attack}}[s(D) > \tau] 
  = 1 - \Pr_{D \sim \text{Attack}}[s(D) \leq \tau] 
  \geq 1 - \exp\!\left(-\Omega\!\left(\frac{n\Delta^2}{\sigma^2}\right)\right).
\end{equation}

\textit{False positive rate} ($\Pr[s(D) > \tau \mid \text{legit}]$):
\begin{equation}
\Pr_{D \sim \text{Legit}}[s(D) > \tau] 
  = \Pr_{D \sim \text{Legit}}[\lambda_2(G) < \tau_\lambda] 
  \leq \exp\!\left(-\Omega\!\left(\frac{n\Delta^2}{\sigma^2}\right)\right).
\end{equation}
These are the stated bounds: TPR approaches 1 and FPR approaches 0 exponentially in $n$.
\end{proof}

\subsection{Proof of Corollary~\ref{cor:sample_complexity} (Sample Complexity)}

\begin{proof}
By Theorem~\ref{thm:detection}, both the false negative and false positive probabilities are bounded by 
$\exp(-\Omega(n\Delta^2/\sigma^2))$ where 
$\Delta = (\epsilon^2 - 4\delta)/4$. Requiring this bound to be at most $\min(\alpha, \beta)$:
\begin{equation}
\exp\!\left(-\Omega\!\left(\frac{n\Delta^2}{\sigma^2}\right)\right) \leq \min(\alpha, \beta).
\label{eq:sample-bound}
\end{equation}
Taking logarithms of both sides and rearranging:
\begin{equation}
n \geq \Omega\!\left(\frac{\sigma^2}{\Delta^2} \log \frac{1}{\min(\alpha, \beta)}\right).
\label{eq:sample-solve}
\end{equation}
Substituting $\Delta^2 = (\epsilon^2 - 4\delta)^2/16$ and absorbing the constant $16$ into the big-$O$:
\begin{equation}
n = O\!\left(\frac{\sigma^2}{(\epsilon^2 - 4\delta)^2} \log \frac{1}{\min(\alpha, \beta)}\right).
\end{equation}
\end{proof}

\section{Theory Validation}
\label{app:theory_validation}

We empirically test the two assumptions behind our detection guarantees: the bounded-noise model (Theorem~\ref{thm:stability}) and the sample complexity bound (Corollary~\ref{cor:sample_complexity}).

\textbf{Noise model.}
Theorem~\ref{thm:stability} assumes bounded encoder noise; the concentration argument in Theorem~\ref{thm:detection} additionally treats this noise as sub-Gaussian. 
We do not observe the ideal semantic geometry directly, so we report two proxies. 
(i)~The encoder (\texttt{all-mpnet-base-v2}) is deterministic at 
inference: re-encoding the same text yields identical outputs, so the encoder contributes no stochastic component. 
(ii)~Across 200 HotpotQA safe contexts, the pairwise cosine-similarity 
distribution (Figure~\ref{fig:theory_val}a) is approximately Gaussian ($\mu = 0.34$, $\sigma = 0.18$), indicating that embedding geometry is well-behaved and not dominated by heavy-tailed outliers. 
This supports the sub-Gaussian assumption at the distributional level. 
The effective $\sigma$ (the gap between observed and ideal embeddings) is not directly measurable, and our bounds hold under this assumption rather than as direct empirical guarantees.

\textbf{Sample complexity.}
Corollary~\ref{cor:sample_complexity} predicts that recall improves with context size. 
We vary sentences-per-document $s \in \{1,\dots,6\}$, giving total context sizes $2s$, and recalibrate $\tau$ on the dev set at each $s$(Figure~\ref{fig:theory_val}b). 
Recall rises from $21.1\%$ at $s{=}1$ to $26.6\%$ at $s{=}3$, then plateaus. The plateau reflects HotpotQA's fixed 6-sentence contexts rather than a fundamental method limit.

\textbf{Boundary case at $s{=}2$.}
At $s{=}2$ (4 nodes), $k_{\text{eff}}=\min(k, n{-}1)=3$ forces a complete graph. 
Recall spikes to $75.3\%$, but FPR rises to $1.15\%$, slightly above the $1\%$ target. This is consistent with theory that the detection improves as the graph approaches completeness, but threshold calibration becomes geometry-sensitive at small $n$. 
Our main experiments use $s{=}3$ where this regime is avoided.

\begin{figure}[h]
    \centering
    \includegraphics[width=\linewidth]{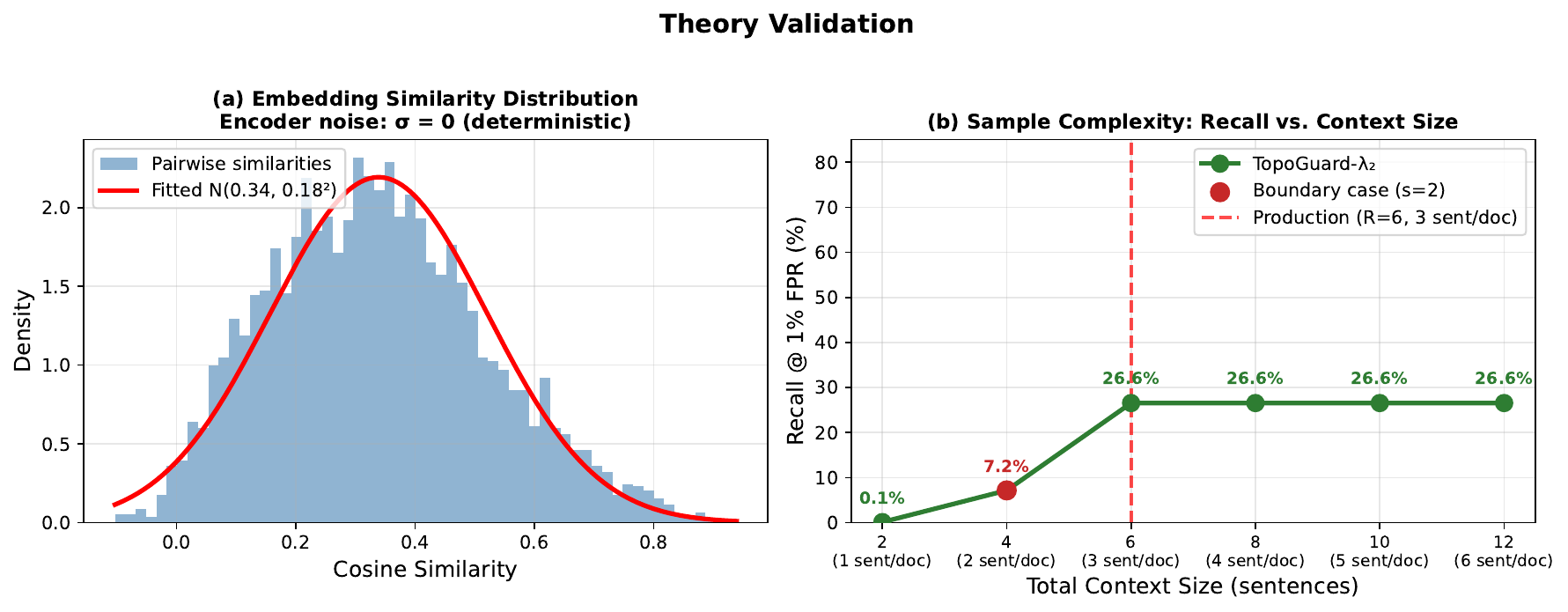}
    \caption{Theory validation on HotpotQA.
    \textbf{(a)} Pairwise cosine-similarity distribution across safe
    contexts is approximately Gaussian, consistent with sub-Gaussian
    embedding geometry.
    \textbf{(b)} Recall vs.\ context size $s$ (sentences per document).}
    \label{fig:theory_val}
\end{figure}

\section{Evaluation on PoisonedRAG Attacks}
\label{app:poisonedrag}

We test TopoGuard on single-passage attacks from PoisonedRAG~\citep{zou2025poisonedrag}
to confirm that our scope is compositional attacks, not content-level injection.
We use the pre-computed adversarial contexts released with PoisonedRAG (HotpotQA,Contriever retriever, $N{=}100$ attacks) and reuse Experiment 1's HotpotQA safe set.

\textbf{Setup.}All contexts are fixed at 6 sentences to match our main calibration.
Attack-1 replaces one clean sentence with one adversarial sentence (1:5 ratio; PoisonedRAG's realistic threat model). 
Attack-3 replaces three clean sentences with three adversarial sentences (3:3 ratio; matches our split-knowledge composition). 
For the safe set, we use 2499 HotpotQA test contexts (similar in construction to Section~\ref{sec:fpr}'s 2474 question safe pool, with no material effect on FPR estimates).
Thresholds are frozen from Experiment 1 with no re-tuning. 
The frozen $\tau$ yields FPRs of 0.60\% ($\lambda_2$), 0.68\% (Cond), 0.80\% (Mod), confirming calibration transfers to this pipeline.

\begin{table}[h]
\centering
\small
\caption{TopoGuard on PoisonedRAG. Under faithful 1:5 injection, all detectors score below chance with negative separation. Recall values are in percent.}
\label{tab:poisonedrag}
\begin{tabular}{lccc}
\toprule
\textbf{Method} & \textbf{AUROC} & \textbf{Recall@1\%FPR} & \textbf{Score Sep.} \\
\midrule
\multicolumn{4}{l}{\textit{Attack-1: 5 clean + 1 adv sentence}} \\
$\lambda_2$          & $0.455 \pm 0.041$ & $1.0$ & $-0.016$ \\
Conductance          & $0.333 \pm 0.038$ & $1.0$ & $-0.066$ \\
Modularity           & $0.424 \pm 0.039$ & $0.0$ & $-0.016$ \\
\midrule
\multicolumn{4}{l}{\textit{Attack-3: 3 clean + 3 adv sentences}} \\
$\lambda_2$          & $0.946 \pm 0.015$ & $30.0$ & $+0.463$ \\
Conductance          & $0.950 \pm 0.014$ & $34.0$ & $+0.315$ \\
Modularity           & $0.954 \pm 0.013$ & $40.0$ & $+0.241$ \\
\midrule
\multicolumn{4}{l}{\textit{Reference: split-knowledge on HotpotQA (Exp.~1)}} \\
$\lambda_2$          & $0.932$ & $26.6$ & $+0.342$ \\
Conductance          & $0.935$ & $29.6$ & --- \\
Modularity           & $0.940$ & $35.1$ & --- \\
\bottomrule
\end{tabular}
\end{table}

Attack-1 confirms our scope claim empirically. 
PoisonedRAG optimizes each adversarial passage to embed near the query, which also places it inside the cluster of legitimate retrievals, reducing rather than increasing spectral fragmentation. Hence the negative score separation: TopoGuard's decision boundary is oriented the wrong way for this threat model.
Attack-3 uses the same adversarial text as Attack-1 but at higher injection density.
Performance recovers to AUROC $\approx 0.95$, matching split-knowledge. 
The signal TopoGuard detects is therefore compositional disruption, not PoisonedRAG content itself.
Single-passage and compositional attacks occupy orthogonal regions of the threat space, and defenses are complementary.
TopoGuard should be paired with a per-passage check (e.g., retrieval-score consistency) for full coverage.

\section{MuSiQue Detection Analysis}
\label{app:musique}

\subsection{Why MuSiQue Detection is Harder}

TopoGuard achieves significantly lower recall on MuSiQue (5--7\%) than on HotpotQA (26--35\%).
We show this is a fundamental property of the dataset, not a failure of the method, by analyzing the document gap (the cosine distance) between the mean embeddings of the two retrieved documents per query.
Table~\ref{tab:docgap} shows the mean document gap for safe and attack contexts on both datasets.
On HotpotQA, safe and attack contexts are clearly separated (0.539 vs.\ 0.881, margin of 0.342), giving the spectral detector a strong signal.
On MuSiQue, this margin shrinks to 0.200 (0.717 vs.\ 0.917) because legitimate multi-hop queries naturally span semantically distant domains.

\begin{table}[h]
\centering

\begin{tabular}{lcc}
\toprule
\textbf{Split} & \textbf{Mean Gap} & \textbf{Std} \\
\midrule
HotpotQA safe    & 0.539 & 0.166 \\
HotpotQA attack  & 0.881 & 0.117 \\
\midrule
MuSiQue safe     & 0.717 & 0.200 \\
MuSiQue attack   & 0.917 & 0.104 \\
MuSiQue Q4 safe  & 0.947 & ---   \\
\bottomrule
\end{tabular}
\caption{Mean document gap (1 $-$ cosine similarity) by dataset and split.}
\label{tab:docgap}
\end{table}

\begin{table}[ht]
\centering
\caption{Full FPR (\%) stratification on benign multi-hop queries by semantic distance (doc-gap).}
\label{tab:exp2-fpr-full}
\small
\begin{tabular}{@{}lccccc@{}}
\toprule
\textbf{Method} & \textbf{Overall} & \textbf{Q1 (Low)} & \textbf{Q2} & \textbf{Q3} & \textbf{Q4 (High)} \\
\midrule
\multicolumn{6}{@{}l}{\textit{HotpotQA ($n = 2{,}474$)}} \\
\multicolumn{6}{@{}l}{\textit{Graph-Theoretic Methods (Ours)}} \\
TopoGuard-Cond$(k{=}4)$                      & 0.32 & 0.00 & 0.00 & 0.00 & 1.29 \\
TopoGuard-$\lambda_2(k{=}4)$                 & 0.40 & 0.00 & 0.00 & 0.00 & 1.62 \\
TopoGuard-$\lambda_2$+Entity$(\alpha{=}0.4)$ & 0.49 & 0.00 & 0.00 & 0.00 & 1.94 \\
TopoGuard-Modularity$(k{=}4)$                & 0.81 & 0.00 & 0.00 & 0.00 & 3.23 \\
\midrule
\multicolumn{6}{@{}l}{\textit{Baseline Methods}} \\
GraphAvgWeight$(k{=}4)$ & 1.29 & 0.00 & 0.00 & 0.32 & 4.85 \\
NaiveDensity            & 1.41 & 0.00 & 0.00 & 0.32 & 5.33 \\
\midrule
\multicolumn{6}{@{}l}{\textit{Text-Only Filters}} \\
TextFilter (RoBERTa)        & 0.93 & 0.81 & 1.13 & 0.49 & 1.29 \\
LlamaGuard-2-8B             & 1.41 & 2.10 & 1.13 & 1.13 & 1.29 \\
LlamaGuard-3-8B             & 1.13 & 0.65 & 0.81 & 1.13 & 1.94 \\
LLM-as-a-Judge (Llama-3-8B) & 0.00 & 0.00 & 0.00 & 0.00 & 0.00 \\
\midrule
\multicolumn{6}{@{}l}{\textit{MuSiQue ($n = 1{,}180$)}} \\
\multicolumn{6}{@{}l}{\textit{Graph-Theoretic Methods (Ours)}} \\
TopoGuard-Cond$(k{=}4)$                      & 1.02 & 0.00 & 0.34 & 0.34 & 3.39 \\
TopoGuard-$\lambda_2(k{=}4)$                 & 1.02 & 0.00 & 0.34 & 0.34 & 3.39 \\
TopoGuard-$\lambda_2$+Entity$(\alpha{=}0.4)$ & 0.85 & 0.00 & 0.00 & 0.00 & 3.39 \\
TopoGuard-Modularity$(k{=}4)$                & 1.36 & 0.00 & 0.00 & 0.00 & 5.42 \\
\midrule
\multicolumn{6}{@{}l}{\textit{Baseline Methods}} \\
GraphAvgWeight$(k{=}4)$ & 0.25 & 0.00 & 0.00 & 0.68 & 0.34 \\
NaiveDensity            & 0.51 & 0.00 & 0.00 & 0.34 & 1.69 \\
\midrule
\multicolumn{6}{@{}l}{\textit{Text-Only Filters}} \\
TextFilter (RoBERTa)        & 3.64 & 2.71 & 3.39 & 5.08 & 3.39 \\
LlamaGuard-2-8B             & 0.93 & 1.69 & 0.68 & 0.34 & 1.02 \\
LlamaGuard-3-8B             & 0.34 & 0.00 & 0.00 & 0.34 & 1.02 \\
LLM-as-a-Judge (Llama-3-8B) & 0.00 & 0.00 & 0.00 & 0.00 & 0.00 \\
\bottomrule
\end{tabular}
\end{table}

\subsection{MuSiQue Q4 Queries Resemble Attacks}

Figure~\ref{fig:docgap} shows the full document gap distributions.
The key finding is in Panel~(b): MuSiQue Q4 safe contexts (the 25\% of legitimate queries with the highest document gap) reach a mean gap of 0.947. 
This is nearly identical to HotpotQA attacks (0.881).
Specifically, 76.1\% of MuSiQue Q4 safe gaps exceed the median HotpotQA attack gap, and the two distributions are statistically distinct (KS $p < 0.001$) only because MuSiQue Q4 gaps are actually higher than HotpotQA attacks, not lower.
This means the detector correctly flags MuSiQue Q4 queries as anomalous but they are legitimate.
The low MuSiQue recall is therefore a consequence of operating near the theoretical detection limit: when legitimate queries are as semantically disconnected as attacks, no structural detector can reliably distinguish them without additional context such as the query itself or entity overlap.

These results suggest that query-conditioned detection i.e. using the query to anchor expected document connectivity could resolve the ambiguity for
MuSiQue-style cross-domain reasoning. We leave this extension to future work.

\begin{figure}[h]
    \centering
    \includegraphics[width=\linewidth]{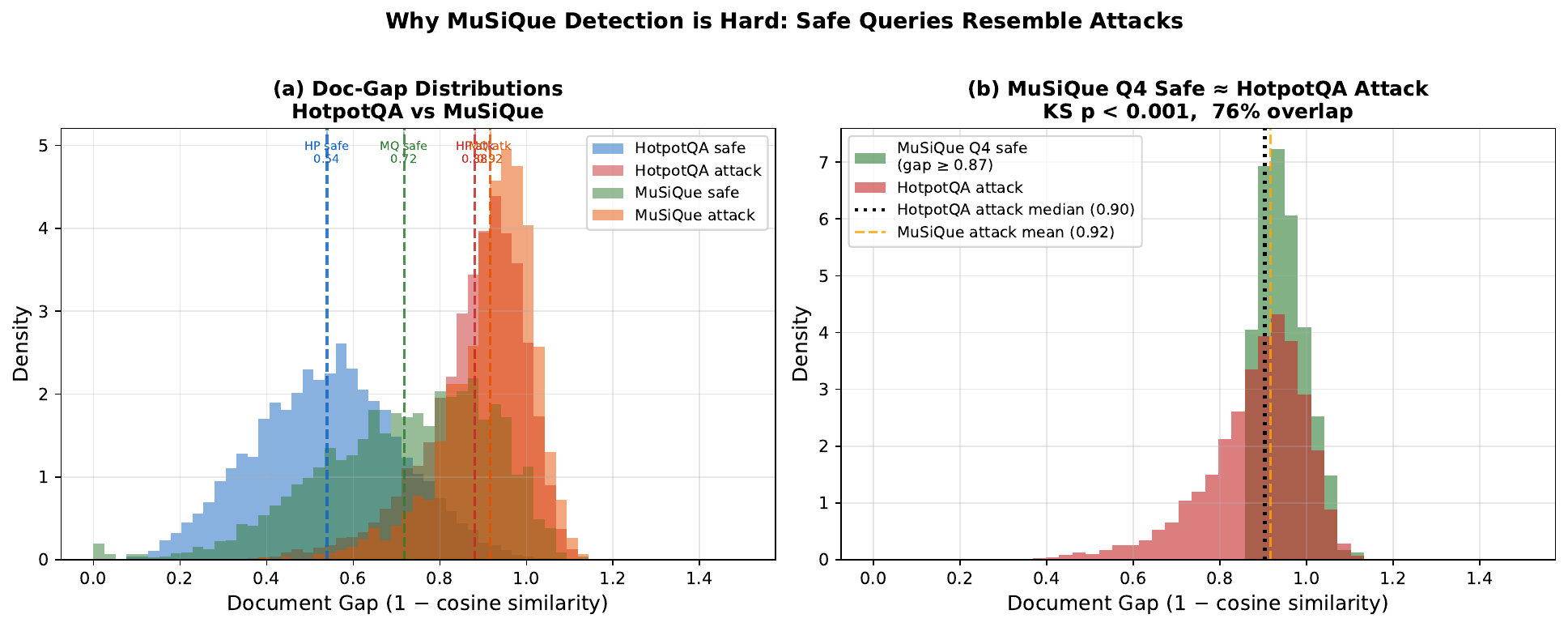}
    \caption{Document gap distributions.
    \textbf{(a)} HotpotQA safe contexts (blue) are well-separated
    from attacks (red) and MuSiQue safe contexts (green) overlap substantially with HotpotQA attacks.
    \textbf{(b)} MuSiQue Q4 safe contexts have gaps nearly identical
    to HotpotQA attacks.}
    \label{fig:docgap}
\end{figure}

\section{Full FPR Stratification by Document Gap}
\label{app:fpr-full}

Table~\ref{tab:exp2-fpr-full} presents the full FPR breakdown by document-gap quartile for both datasets.
Q1 contains the queries with the lowest semantic distance between supporting documents; Q4 contains the highest. 
On HotpotQA, all topological methods achieve exactly 0\% FPR on Q1--Q3, confirming that natural multi-hop reasoning chains are not flagged.
Errors concentrate entirely in Q4, where legitimate queries span the most distant semantic domains. 
MuSiQue shows a similar but noisier pattern due to its deliberate cross-domain design (Appendix~\ref{app:musique}).

\begin{table}[h]
\centering
\caption{TopoGuard across three encoder families on HotpotQA ($k{=}4$, 1\% FPR target). $\tau$ recalibrated per encoder. Bootstrap mean $\pm$ std over 1{,}000 resamples.}
\label{tab:encoder}
\small
\begin{tabular}{@{}llcc@{}}
\toprule
Encoder & Detector & AUROC (\%) $\uparrow$ & Recall@1\%FPR (\%) $\uparrow$ \\
\midrule
MPNet (baseline) & TopoGuard-$\lambda_2$        & 93.2 $\pm$ 0.2 & 26.6 $\pm$ 0.5 \\
MPNet (baseline) & TopoGuard-$\lambda_2$+Entity & 95.2 $\pm$ 0.1 & 32.6 $\pm$ 0.5 \\
\midrule
BGE-small        & TopoGuard-$\lambda_2$        & 91.5 $\pm$ 0.2 & 26.3 $\pm$ 0.4 \\
BGE-small        & TopoGuard-$\lambda_2$+Entity & 94.2 $\pm$ 0.1 & 34.8 $\pm$ 0.5 \\
\midrule
MiniLM-L12-v2    & TopoGuard-$\lambda_2$        & 92.4 $\pm$ 0.2 & 21.3 $\pm$ 0.4 \\
MiniLM-L12-v2    & TopoGuard-$\lambda_2$+Entity & 94.7 $\pm$ 0.1 & 32.7 $\pm$ 0.5 \\
\bottomrule
\end{tabular}
\end{table}

\begin{table}[h]
\centering
\caption{Self-supervised vs supervised calibration. Self-supervised 
uses only the 99th percentile of benign dev scores; no attack labels 
are used. Recall is reported as bootstrap mean $\pm$ std at the 
calibrated $\tau$.}
\label{tab:self-supervised}
\small
\begin{tabular}{@{}llrrr@{}}
\toprule
Dataset & Detector & Sup.\ Recall & SS Recall & $\Delta$ Recall \\
\midrule
HotpotQA & TopoGuard-$\lambda_2$        & 26.55 $\pm$ 0.45 & 26.53 $\pm$ 0.45 & +0.02 \\
HotpotQA & TopoGuard-Cond               & 29.60 $\pm$ 0.47 & 29.54 $\pm$ 0.47 & +0.06 \\
HotpotQA & TopoGuard-Modularity         & 35.09 $\pm$ 0.49 & 35.03 $\pm$ 0.49 & +0.06 \\
HotpotQA & TopoGuard-$\lambda_2$+Entity & 32.63 $\pm$ 0.48 & 32.62 $\pm$ 0.49 & +0.01 \\
\midrule
MuSiQue  & TopoGuard-$\lambda_2$        &  5.70 $\pm$ 0.44 &  5.70 $\pm$ 0.44 & +0.00 \\
MuSiQue  & TopoGuard-Cond               &  5.50 $\pm$ 0.43 &  5.43 $\pm$ 0.43 & +0.07 \\
MuSiQue  & TopoGuard-Modularity         &  7.23 $\pm$ 0.49 &  7.20 $\pm$ 0.49 & +0.03 \\
MuSiQue  & TopoGuard-$\lambda_2$+Entity &  5.53 $\pm$ 0.43 &  5.53 $\pm$ 0.43 & +0.00 \\
\bottomrule
\end{tabular}
\end{table}

\section{Encoder Generalization}
\label{app:encoder-ablation}

We test whether TopoGuard generalizes beyond the MPNet encoder used in our main experiments.
We re-run TopoGuard-$\lambda_2$ and TopoGuard-$\lambda_2$+Entity on HotpotQA with two additional sentence encoders: BGE-small-en-v1.5 (384-dim, retrieval-tuned) and all-MiniLM-L12-v2 (384-dim, distilled).
$\tau$ is recalibrated on the dev set for each encoder using the self-supervised procedure (Section~\ref{sec:self-supervised}).

Table~\ref{tab:encoder} reports the results.
AUROC stays in a tight range across all three encoders (91.5–95.2\%), confirming the topological signal is not specific to MPNet.
The hybrid detector is consistently strong across encoders (94.2–95.2\% AUROC, 32.6–34.8\% Recall@1\%FPR) and is the most encoder-robust configuration.
TopoGuard-$\lambda_2$ alone is more sensitive: BGE-small matches MPNet, while MiniLM drops to 21.3\% recall.
This reflects the known sensitivity of strict-FPR operating points to score-distribution shifts.
For production, we recommend the hybrid detector with MPNet or BGE-small as the encoder.

\section{Label-Free Calibration: Empirical Validation}
\label{sec:self-supervised-appendix}

Threshold calibration for a target false positive rate depends only on the negative-class score distribution; attack labels are not mathematically required. 
We verify empirically that recall is preserved under this standard label-free procedure compared to supervised calibration that uses dev attack labels to confirm the FPR target.

Table~\ref{tab:self-supervised} reports the comparison across all four detectors and both datasets. 
Self-supervised calibration sets $\tau$ at the 99th percentile of benign dev scores. 
The $\Delta$ Recall column shows the gap to supervised calibration; in all 8 settings the gap is below 0.07pp, well within bootstrap noise.

\section{Compute Resources}
\label{sec:compute}
All experiments use a single NVIDIA L40S GPU.
TopoGuard's spectral computations themselves are CPU-runnable in 
sub-millisecond time per query (Figure~\ref{fig:latency}); the GPU is used only for sentence-embedding (all-mpnet-base-v2 in main experiments, BGE-small/MiniLM in the encoder ablation) and for 
LLM-based baselines.

\section{Broader Impact Statement}
\label{app:impact}
TopoGuard is a defensive contribution that makes production RAG systems more resistant to compositional attacks.
It provides a sub-millisecond detection layer that runs alongside existing content moderation, calibrates without labeled attacks, and re-tunes predictably when retrieval configurations change.
By releasing the defense alongside the threat model, we aim to harden RAG safety before split-knowledge attacks are observed in deployed systems.
TopoGuard is not a standalone safety mechanism.
It should be deployed in cascade with per-document content moderation systems.

\newpage
\end{document}